%% file: 00_main.tex
\documentclass{article} 
\usepackage{iclr2027_conference,times}

\input{math_commands.tex}

\usepackage{hyperref}
\usepackage[utf8]{inputenc} 
\usepackage[T1]{fontenc}    
\usepackage{url}            
\usepackage{amssymb}
\usepackage{booktabs}       
\usepackage{amsfonts}       
\usepackage{nicefrac}       
\usepackage{microtype}      
\usepackage{xcolor}         
\usepackage{tcolorbox}
\usepackage{graphicx}
\usepackage{subfigure}
\usepackage{subcaption}
\usepackage{enumerate}
\usepackage{amsmath}
\usepackage{amssymb}
\usepackage{mathtools}
\usepackage{amsthm}
\usepackage[table]{xcolor}
\usepackage{algorithm}
\usepackage{algorithmic}
\usepackage{titlesec}
\usepackage{wrapfig}
\usepackage{caption}
\usepackage[normalem]{ulem}
\usepackage{multirow}
\usepackage{array}
\usepackage{multicol}
\usepackage{tikz}
\usepackage{makecell}
\theoremstyle{plain}
\newtheorem{theorem}{Theorem}[section]
\newtheorem{proposition}[theorem]{Proposition}

\newtheorem{corollary}[theorem]{Corollary}
\theoremstyle{definition}
\newtheorem{definition}[theorem]{Definition}

\theoremstyle{remark}
\newtheorem{remark}[theorem]{Remark}

\def\method{\textbf{Sculpt}}
\def\bench{\textbf{UrbanWindFlow}}
\definecolor{myred}{RGB}{243,125,119}
\definecolor{deepdeepred}{RGB}{107,0,12}
\definecolor{mydeepred}{RGB}{178,23,41}
\definecolor{mygreen}{RGB}{112,195,121}
\definecolor{myblue}{RGB}{0,148,254}
\definecolor{mypurple}{RGB}{128,128,254}
\definecolor{mydeepblue}{RGB}{117,112,179}

\definecolor{thered}{RGB}{247,105,108}   
\definecolor{thegreen}{RGB}{99,190,123}  
\definecolor{myred}{HTML}{D3393C}
\definecolor{mygreen}{HTML}{3A9D5D}

\newcommand{\colorcell}[1]{%
  \pgfmathparse{#1}%
  \ifdim\pgfmathresult pt<0pt
    \pgfmathsetmacro{\val}{min(100, abs(#1)/30 * 100)} 
    \edef\colorval{\noexpand\cellcolor{thered!\val!white}}%
  \else
    \pgfmathsetmacro{\val}{min(100, #1/631.4 * 100)} 
    \edef\colorval{\noexpand\cellcolor{thegreen!\val!white}}%
  \fi
  \colorval
}

\newcommand{\calimp}[1]{%
    \ifdim#1 pt > 0pt %
        \textcolor{mygreen}{\tiny ↑\the\numexpr #1\relax\%}%
    \else
        \textcolor{myred}{\tiny ↓\the\numexpr -#1\relax\%}%
    \fi
}

\title{Shaping the Wind: Nested Potentials for Kinematically Admissible Urban Wind Prediction}

\author{
Yidi Wang$^{1}$ \quad
Yunhe Zhang$^{1}$ \quad
Jiawei Gu$^{2}$ \quad
Ziyue Qiao$^{2}$ \quad
Pengyang Wang$^{1}$\thanks{Corresponding Author.}\\
$^{1}$The State Key Laboratory of Internet of Things for Smart City, Department of Computer and\\
Information Science, University of Macau\\
$^{2}$School of Computing and Information Technology, Great Bay University\\
\texttt{wang.yidi@connect.um.edu.mo \quad pywang@um.edu.mo}
}

\iclrfinalcopy 
\begin{document}

\maketitle

\input{latex/chapters/0_abstract}

\input{latex/chapters/1_intro}

\input{latex/chapters/2_pre}
\input{latex/chapters/3_method}

\input{latex/chapters/4_experiment}
\input{latex/chapters/5_conclusion}

\input{latex/chapters/6_others}

\input{latex/chapters/8_ack}

\bibliography{iclr2027_conference}
\bibliographystyle{iclr2027_conference}

\input{latex/chapters/7_appendix}

\end{document}

%% file: math_commands.tex
\usepackage{amsmath,amsfonts,bm}

\def\eqref#1{equation~\ref{#1}}

\def\1{\bm{1}}

\DeclareMathAlphabet{\mathsfit}{\encodingdefault}{\sfdefault}{m}{sl}
\SetMathAlphabet{\mathsfit}{bold}{\encodingdefault}{\sfdefault}{bx}{n}



%% file: latex/chapters/0_abstract.tex
\begin{abstract}
Predicting transient urban winds is fundamental to understanding urban microclimates and designing climate-resilient cities. Building-resolving large-eddy simulation produces detailed incompressible urban wind fields at substantial computational cost for each layout. Neural surrogates offer a faster alternative by learning to predict the evolution of velocity fields. However, minimizing velocity prediction error does not guarantee local mass conservation and wall impermeability, which together define kinematic admissibility. This limitation stems from an unconstrained output representation: geometry conditioning guides predictions but does not restrict them to admissible velocity fields. Correcting boundary violations in these outputs changes the flux balance in adjacent fluid cells and may consequently compromise local mass conservation. To address the challenge, we propose \method{}, a nested potential framework that builds the coupled, geometry-dependent constraints directly into its parameterization.
This nested parameterization generates divergence-free velocity updates through the discrete curl of a volume vector potential on the native three-dimensional staggered grid. A shared scalar potential constrains the vector potential's boundary values so that the same operator also enforces impermeability, without a per-step pressure projection. Because backpropagation through this curl attenuates large-scale gradient signals, we parameterize the volume potential at multiple resolutions to better capture large-scale flow structures. We introduce \bench{}, an LES dataset spanning urban morphologies and inflow conditions, to evaluate accuracy and kinematic admissibility together. Across all three evaluation settings, \method{} consistently outperforms both the reduced-order and the neural baselines, reducing mean squared error by $11.7$--$39.3\%$ while maintaining constraint residuals near numerical precision.
\end{abstract}

%% file: latex/chapters/1_intro.tex
\section{Introduction}

Predicting transient urban winds is essential for understanding urban microclimates and informing climate-resilient city design. Buildings redirect the incompressible flow into wakes, recirculation zones, and street-level jets whose evolution depends on both layout and atmospheric conditions~\citep{britter2003flow,blocken2012cfd}. Building-resolving large-eddy simulation (LES) captures these three-dimensional dynamics, but repeated solves across layouts and inflow conditions are computationally demanding~\citep{okaze2026aij}. Neural surrogates offer a faster route by learning to advance velocity fields from their recent history and building geometry~\citep{brunton2020machine,gupta2022towards,lippe2023pde}. Their practical value depends on predicting fields that remain compatible with the surrounding geometry.

\begin{wrapfigure}{R}{0.55\textwidth}
  \vskip -0.1in
  \centering
  \includegraphics[width=0.54\textwidth]{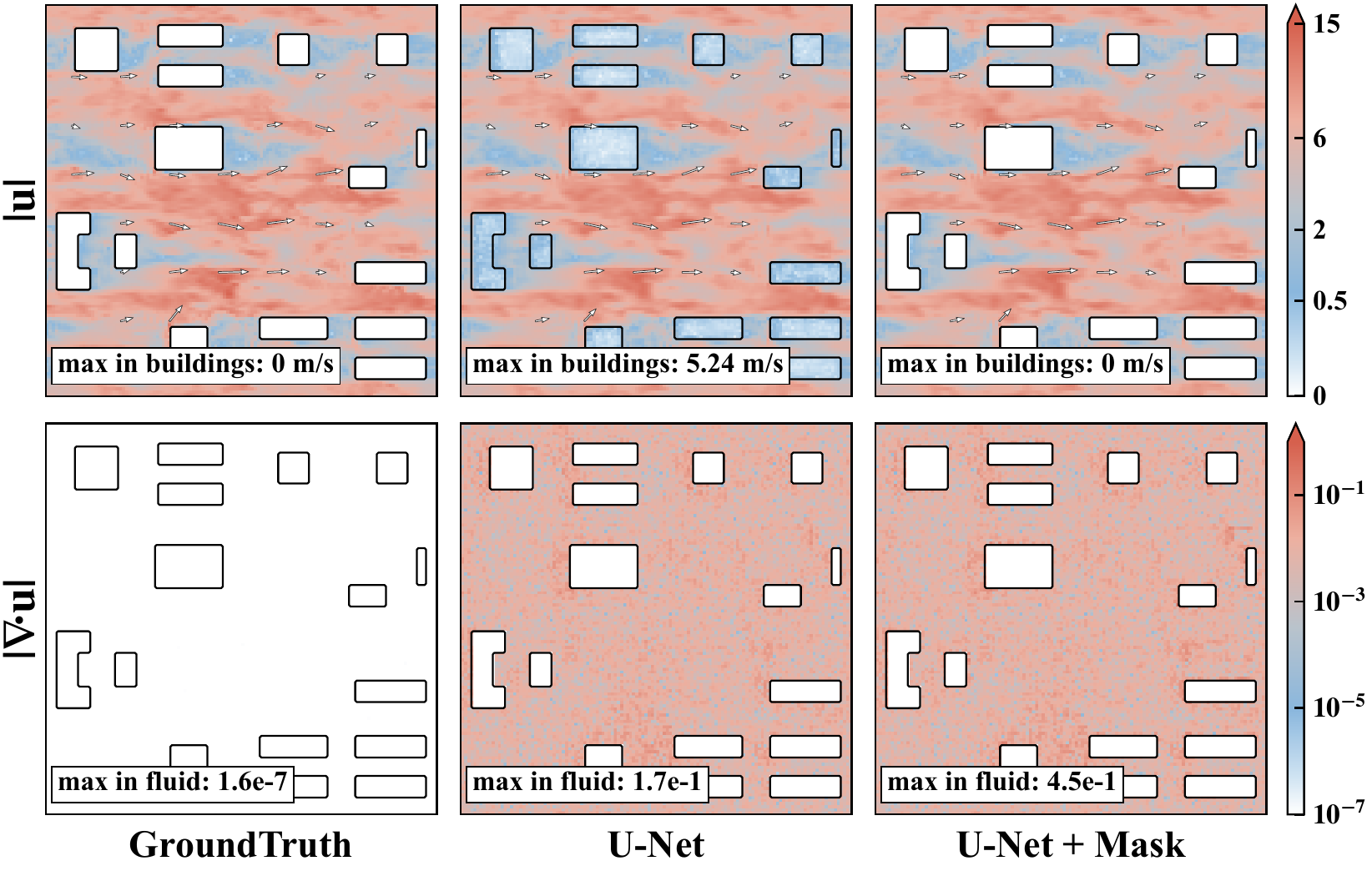}
  \caption{Wind speed (top) and divergence magnitude (bottom) for the ground truth and U-Net predictions before and after enforcing wall impermeability (with direct building mask). Boundary correction eliminates flow through buildings but amplifies divergence near boundaries, compromising local mass conservation.
  }
  \label{fig:motivate}
  \vskip -0.1in
\end{wrapfigure}

Geometry conditioning alone does not ensure this compatibility. Local mass conservation requires balanced fluxes in each incompressible fluid cell, while wall impermeability requires zero normal flow through solid surfaces.
Here, \emph{kinematic admissibility}~\citep{drucker1952extended,batchelor2000introduction} refers to the joint satisfaction of these two conditions. They constrain shared velocity degrees of freedom: changing a wall-face velocity also changes the flux balance in its adjacent cell. Fig.~\ref{fig:motivate} illustrates this coupling: a U-Net prediction closely resembles the reference field but permits flow through buildings; masking the solid region suppresses penetration while amplifying divergence near building edges. The challenge is to ensure kinematic admissibility across different building geometries.

Existing approaches offer different ways to incorporate these constraints. Soft penalties encourage their satisfaction during training~\citep{raissi2019physics,karniadakis2021physics}, while a geometry-aware pressure projection enforces both through an additional solve at each prediction step. Structured output representations impose conservation directly, including through vector potentials and divergence-free neural fields~\citep{beucler2021enforcing,wandel2021teaching,richter2022neural}. Geometry-dependent compatible spaces have also been developed for steady PDE problems~\citep{shaffer2026structure}. For an explicit urban-flow surrogate, the central construction problem is to encode solid boundaries in the potential while retaining a divergence-free update on the native three-dimensional grid. A curl guarantees zero divergence, but impermeability depends on the potential's boundary values.

To address this representational challenge, we propose \method{}, a nested potential framework that enforces kinematic admissibility through a geometry-dependent parameterization of velocity updates. For a layout $g$, discrete conservation and wall conditions define a linear operator $\mathcal{C}(g)$ and an admissible space $\mathcal{V}(g)=\ker\mathcal{C}(g)$. \method{} maps unconstrained potential variables into this space through nested volume and boundary potentials on the native three-dimensional staggered grid. The discrete curl of the vector potential yields divergence-free increments through cancellation of edge contributions within each cell. To enforce impermeability through the same operator, the vector potential's boundary values are assembled from differences of a shared scalar potential, together with compatible global circulation modes. This makes circulation around every boundary face vanish, ensuring zero normal flow. A volumetric module and a surface graph network predict these potential variables, which are assembled before the curl readout. Starting from an admissible state, the resulting updates preserve both constraints throughout a rollout on the fixed layout without a per-step pressure projection. This representation also shapes learning: backpropagation of the velocity loss through the adjoint of the curl reweights gradients across spatial scales and attenuates large-scale signals. We therefore parameterize the volume potential at multiple resolutions, giving coarse flow structures dedicated prediction branches while retaining the same admissible readout.

To evaluate prediction accuracy and kinematic admissibility together, we introduce \bench{}, a dataset of 85 building-resolving LES sequences spanning urban morphologies and inflow conditions. Its three evaluation settings separate unseen layout instances, unseen morphology classes, and held-out inflow combinations. Across these settings, \method{} reduces 16-step mean squared error by 11.7–39.3\% relative to nine baselines while maintaining constraint residuals near numerical precision. The nested potentials yield lower error than penalty and projection variants, while multiresolution potentials improve accuracy under three shifts.

Our contributions are threefold:
\begin{enumerate}[0]
\item[$\bullet$] \textbf{A framework for learning kinematic admissibility.} We introduce a nested volume-boundary potential construction that jointly enforces discrete mass conservation and wall impermeability during autoregressive urban wind-flow prediction.
\item[$\bullet$] \textbf{Learning effects of physical structure.} We characterize the scale dependence of curl-induced gradients and develop multiresolution potentials to support the learning of large-scale dynamics.
\item[$\bullet$] \textbf{Joint evaluation of accuracy, admissibility, and generalization.} We provide a public LES dataset and evaluation protocol that assess prediction accuracy, kinematic admissibility, and transfer across urban morphologies and inflow conditions.
\end{enumerate}

%% file: latex/chapters/2_pre.tex
\section{Preliminaries}

In building-resolving urban wind modeling, we consider the wind in a city neighborhood as an incompressible fluid, discretized on a uniform staggered grid that is periodic in the two horizontal directions and bounded below by the ground and above by a lid. The building layout is a per-cell occupancy mask that marks each cell as fluid or solid. We denote that mask by $g$ and $\Omega(g)$ for the set of cells it marks as fluid. The instantaneous velocity $\mathbf{u}$ is stored on fluid cell faces of the Arakawa C-grid~\citep{harlow1965numerical}. The layout therefore determines the fluid cells and the boundary faces on which the velocity field must satisfy the kinematic conditions defined below.

\subsection{Kinematically Admissible Urban Wind Field}
\begin{definition}[Kinematic Admissibility] \label{def:ka}
A velocity field $\mathbf{u}$ is kinematically admissible~\citep{drucker1952extended,batchelor2000introduction} for the layout $g$ when it jointly satisfies the two conditions over the whole geometry. The first is local mass conservation in every fluid cell. The second is wall impermeability, meaning that no flow can cross an impermeable surface:
\begin{equation} \label{eq:pre-admissible}
    (\nabla_h\!\cdot\!\mathbf{u})_c = 0 \quad \forall\, c\in\Omega(g),
    \qquad\qquad
    u_f = 0 \quad \forall\, f\in\mathcal{F}_{\partial}(g),
\end{equation}
where $c$ ranges over the fluid cells $\Omega(g)$, $(\nabla_h\!\cdot\!\mathbf{u})_c$ is the finite-volume divergence of that cell, $f$ ranges over $\mathcal{F}_{\partial}(g)$, the set of impermeable faces, namely the building surfaces, the ground and the lid, and $u_f$ is the velocity component normal to $f$. Only that normal component is constrained.
\end{definition}

Because both conditions in Eq.~(\ref{eq:pre-admissible}) are linear in $\mathbf{u}$, they can be written jointly as
$\mathcal{C}(g)\mathbf{u}=\mathbf{0}$,
where $\mathcal{C}(g)$ is a sparse linear operator read off from the mask. Their solutions constitute the admissible space
$\mathcal{V}(g)=\ker\mathcal{C}(g)\subset\mathbb{R}^{N}$,
where $\ker\mathcal{C}(g)$ denotes the kernel of $\mathcal{C}(g)$ and $N$ is the total number of faces of the fluid cells. This formulation highlights three properties of the admissible fields. \textbf{(1)} $\mathcal{V}(g)$ is fixed by the geometry: a different layout corresponds to a different subspace rather than a different value of a conditioning variable. \textbf{(2)} For a given layout $g$, every true wind field $\mathbf{u}$ should belong to $\mathcal{V}(g)$ because it satisfies both kinematic conditions. Hence, for any prediction $\hat{\mathbf{u}}$ of that field, any component outside $\mathcal{V}(g)$ cannot match the true field and can only add to the prediction error~\citep{beucler2021enforcing,zanetta2023physics}. Mathematically, the orthogonal decomposition gives $\|\hat{\mathbf{u}}-\mathbf{u}\|^{2}=\|\Pi_g\hat{\mathbf{u}}-\mathbf{u}\|^{2}+\|(\mathbf{I}-\Pi_g)\hat{\mathbf{u}}\|^{2},$ where $\Pi_g$ denotes the orthogonal projector onto $\mathcal{V}(g)$. \textbf{(3)} The two kinematic conditions are coupled through shared velocity degrees of freedom. A boundary face is also part of an adjacent fluid cell, so its velocity is involved in both wall impermeability and the cell's mass balance. Changing that velocity to satisfy one condition can affect the other. These suggest that admissibility should be built into the output itself, rather than imposed afterward.

\begin{figure*}[t]
\vskip -0.2in
\vspace{-10pt}
  \centering
  \includegraphics[width=\textwidth]{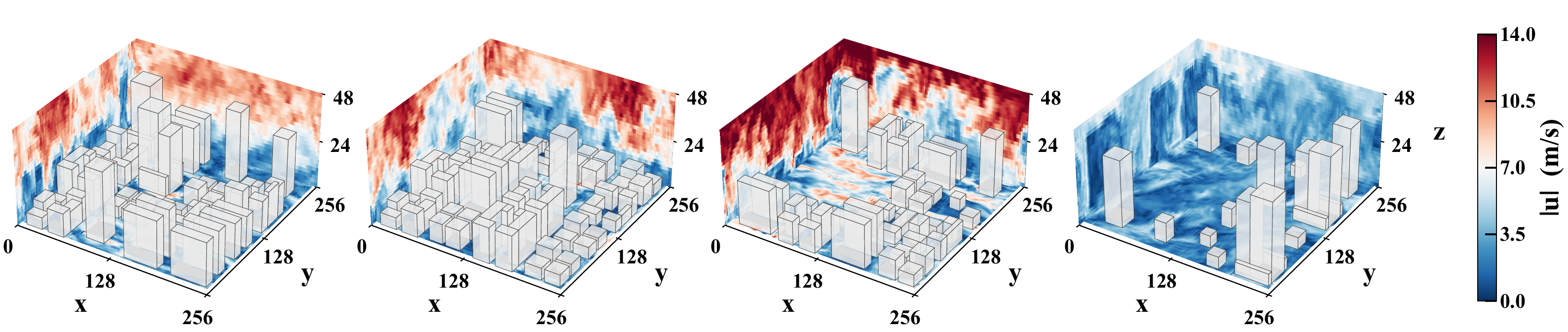}
  \caption{Visualization of the layouts and the instantaneous wind speed $|\mathbf{u}|$ in four \bench{} cases, which differ in both morphology and inflow condition. Axes are in meters.}
  \label{fig:layouts}
  \vspace{-8pt}
\end{figure*}

{\bfseries Urban Wind Field Dataset.}
To systematically study the kinematic admissibility in Definition~\ref{def:ka} across different urban geometries and inflow conditions, we construct \bench{}, a dataset of three-dimensional instantaneous wind fields that resolve buildings on the grid and span a range of layouts and inflow conditions, using the PALM large-eddy simulation model~\citep{maronga2020overview}. It contains 85 building-resolving wind-field sequences that vary in both morphology and inflow condition. Each sequence consists of 100 consecutive snapshots (frames) at $\Delta t=1$\,s after the flow has become statistically stationary. Fig.~\ref{fig:layouts} shows representative layouts and wind fields. Every snapshot satisfies the two kinematic conditions within numerical precision.

\subsection{Problem Formulation} \label{sec:bench}

{\bfseries Task Definition.}

Considering the temporal nature of wind fields, we formulate a history-based prediction task that incorporates the building layout and inflow condition under which the wind field evolves, with the latter including wind direction, free-stream speed, and atmospheric stability. 
Given a set of sequences $\mathcal{D}$, the $n$ most recent snapshots, the layout $g$ and the inflow condition $a$, a predictor $f_\omega$ with parameters $\omega$ produces the next frame as $\hat{\mathbf{u}}^{t+1} = f_\omega(\mathbf{u}^{t-n+1},\dots,\mathbf{u}^{t};\,g,\,a)$, and a $k$-step prediction $\hat{\mathbf{u}}^{t+k}$ is obtained autoregressively. The objective function is as follows:
\begin{equation} \label{eq:pre-objective}
    \omega^{*} = \arg\min_{\omega}\; \mathbb{E}_{(g,a)\sim\mathcal{D}}\;\mathbb{E}_{t,k}\Big[\ell\big(\hat{\mathbf{u}}^{t+k},\, \mathbf{u}^{t+k}\big)\Big],
    \quad \text{s.t.}\;\; \hat{\mathbf{u}}^{t+k}\in\mathcal{V}(g), \;\;\; 
\end{equation}
where $\ell (\cdot)$ is a loss function, $\mathbf{u}^{t+k}$ is the ground-truth frame at horizon $k$, and $t$ ranges over the frames with $t\geq n$. Eq.~(\ref{eq:pre-objective}) requires the admissibility constraint to hold for any $\omega$, layout, inflow condition, and at every rollout horizon. Beyond admissibility, the predictor should generalize across variations in urban geometry and inflow conditions, thus motivating three evaluation settings: unseen layout instances, unseen morphology classes, and held-out inflow combinations.

{\bfseries Making Admissibility a Property of the Architecture.}
Existing approaches typically impose admissibility on an unconstrained prediction, for example through a soft penalty $\lambda|\mathcal{C}(g)\hat{\mathbf{u}}|^{2}$~\citep{raissi2019physics,karniadakis2021physics}, leaving the residual to the optimizer, or through pressure projection~\citep{li2026project}, which maps each prediction to $\mathcal{V}(g)$ but requires a geometry-dependent elliptic solve at every step. A structured output representation~\citep{ciaremalla2024variable} instead parameterizes predictions directly within the admissible space through a map:

\begin{equation} \label{eq:pre-nullspace}
    \Psi_g:\; \boldsymbol{\theta} \longmapsto \hat{\mathbf{u}},
    \qquad \boldsymbol{\theta}\in\mathbb{R}^{P}~\text{unconstrained},
\end{equation}
where $\boldsymbol{\theta}$ contains $P$ unconstrained variables. If
$\operatorname{range}(\Psi_g)\subseteq\mathcal{V}(g)$, every output of $\Psi_g$ is kinematically admissible regardless of $\boldsymbol{\theta}$. A direct construction of $\Psi_g$ is to use a basis of $\ker\mathcal{C}(g)$. However, such a basis must be factorized separately for each layout and has no natural correspondence across different geometries, making it difficult to share a single predictor across layouts. We instead construct $\Psi_g$ as a nested map using two discrete identities of the staggered grid: a discrete curl has no divergence, and a discrete gradient has no curl. 
Specifically, we define the update $\Delta\mathbf{u}$ as the curl of a volume vector potential, which conserves mass in every cell regardless of the mask. The mask is therefore needed only for the second condition. Impermeability is enforced within the same operator by assembling the potential on the edges bounding $\mathcal{F}_{\partial}(g)$ from a shared scalar potential, with these boundary edges providing the layout information for $\Psi_g$. Since $\mathcal{V}(g)$ is a linear subspace, adding an admissible update to an admissible frame preserves admissibility by the autoregressive rollout.

%% file: latex/chapters/3_method.tex
\section{Methodology}

\begin{figure*}[t]
  \centering
  \includegraphics[width=\textwidth]{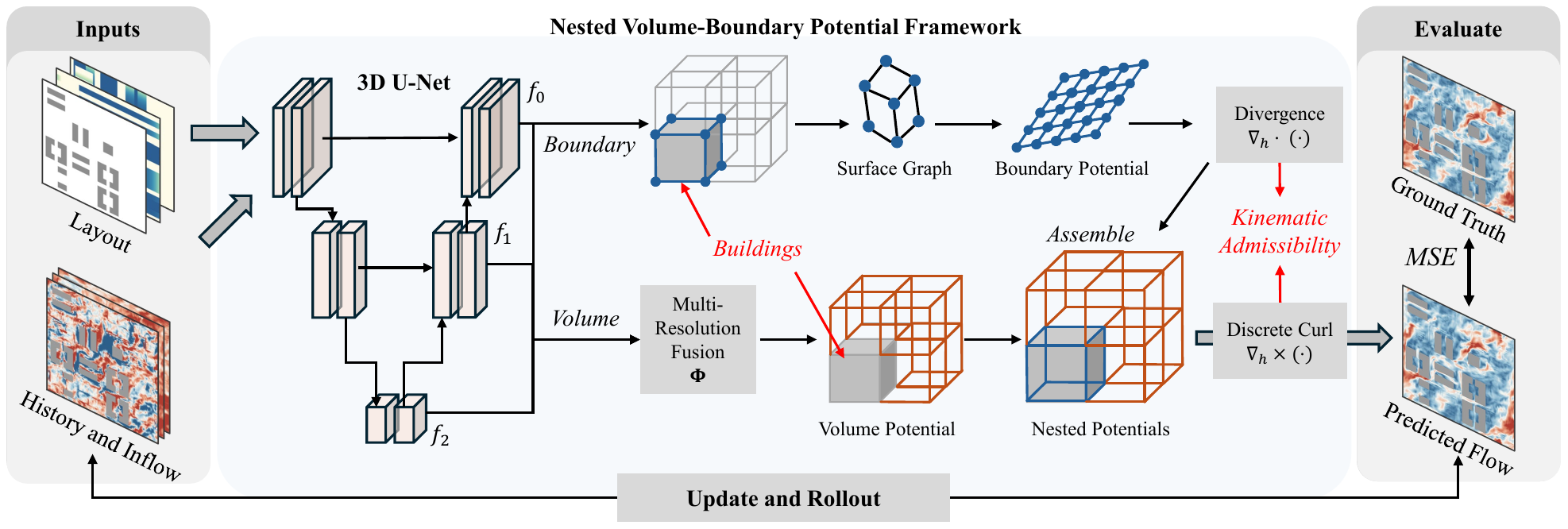}
  \caption{Architecture overview. Given the layout, flow history, and inflow condition, \method{} consists of two parts: a volume potential module that captures multi-scale circulation on the volume grid, and a boundary potential module that models geometry-dependent boundary interactions on the surface mesh. Their outputs are assembled into a nested potential, whose discrete curl yields a kinematically admissible velocity increment for autoregressive rollout.}
  \label{fig:framework}
  \vskip -0.2in
\end{figure*}

As shown in Fig.~\ref{fig:framework}, \method{} emits two potential fields and reads the velocity update from them. A volumetric module produces the volume vector potential on the grid edges, a surface graph network produces the boundary scalar potential on the surface vertices. Then a parameter-free assembly operator combines them and applies the discrete curl. The following sections detail their formulations.

\subsection{Nested Volume-boundary Potential Framework} \label{sec:rep}

\paragraph{Volume Vector Potential.}
Rather than directly predicting the velocity on faces, we follow prior divergence-free parameterizations~\citep{richter2022neural,wandel2021teaching} and predict a vector potential on grid edges. Let $h$ be the grid spacing, $\mathcal{E}$ be the set of grid edges, and $A_e\in\mathbb{R}$ be the value carried by edge $e\in\mathcal{E}$, the collection of which forms the volume vector potential $\mathbf{A}$. Following the discrete Stokes theorem~\citep{hyman1997natural, desbrun2005discrete}, the velocity update on a face $f$ is recovered as the circulation of $\mathbf{A}$ around its boundary via
\begin{equation} \label{eq:curl}
    \Delta u_f \;=\; (\nabla_h\times\mathbf{A})_f \;=\; \frac{1}{h}\sum_{e\in\partial f}\sigma_{f,e}\,A_e ,
\end{equation}
where $\partial f$ denotes the four edges bounding $f$ with a fixed orientation and
$\sigma_{f,e}\in\{+1,-1\}$ is the incidence sign of $e$ in that oriented boundary. Local mass conservation is guaranteed automatically, as formalized in the following proposition. The proof is provided in Appendix~\ref{app:constructions}.
\begin{proposition} \label{prop:div}
For every $\mathbf{A}\in\mathbb{R}^{|\mathcal{E}|}$, the field defined by Eq.~(\ref{eq:curl}) satisfies $(\nabla_h\!\cdot\!\Delta\mathbf{u})_c=0$ in every cell $c$ and in exact arithmetic.
\end{proposition}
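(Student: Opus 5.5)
The plan is to prove the identity $\mathrm{div}_h\circ\mathrm{curl}_h = 0$ as a purely combinatorial fact about the staggered grid. The key point is the cochain-complex property $\partial\circ\partial=0$ applied to a single cell. First I fix the finite-volume divergence on a cell $c$ in the same signed-incidence form as Eq.~(\ref{eq:curl}):
\[
(\nabla_h\!\cdot\!\mathbf{v})_c=\frac{1}{h}\sum_{f\in\partial c}\tau_{c,f}\,v_f ,
\]
where $\partial c$ is the set of six faces of $c$ and $\tau_{c,f}\in\{+1,-1\}$ records whether the fixed orientation (normal) of $f$ points out of or into $c$. This is the standard face-flux divergence, since outward flux through each face is $\pm v_f h^2$, divided by the volume $h^3$. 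Substituting Eq.~(\ref{eq:curl}) and exchanging the finite sums gives
\[
(\nabla_h\!\cdot\!\Delta\mathbf{u})_c=\frac{1}{h^{2}}\sum_{e}\Big(\sum_{f\in\partial c,\;e\in\partial f}\tau_{c,f}\,\sigma_{f,e}\Big)A_e .
\]
It therefore suffices to show that the bracketed coefficient vanishes for every edge $e$. Since $\mathbf{A}$ is arbitrary, this is exactly the claim, and no property of the mask or of $\mathbf{A}$ is used.

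The core step is a local count. An edge $e$ contributes to the bracket only if it lies on the boundary of some face of $c$, that is, only if $e$ is one of the twelve edges of the cube $c$. Each such edge lies on exactly two faces of $c$. I then verify that the two contributions carry opposite signs. The oriented face orientations $\tau_{c,f}$ induce, through $\sigma_{f,e}$, the boundary circulation of each face taken counterclockwise when viewed from outside $c$. Two adjacent faces of a cube, both oriented outward, traverse their shared edge in opposite directions. This is the same cancellation as in the continuous Stokes argument that the boundary of a closed surface is empty.

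I would make this concrete with the index convention. Put $A_x$ on $x$-edges at $(i+\tfrac12,j,k)$, and similarly for $A_y$ and $A_z$. Then $(\nabla_h\times\mathbf{A})_x=(\delta_y A_z-\delta_z A_y)/h$ and so on, with $\delta$ denoting forward differences. The divergence becomes
\[
\delta_x(\delta_yA_z-\delta_zA_y)+\delta_y(\delta_zA_x-\delta_xA_z)+\delta_z(\delta_xA_y-\delta_yA_x),
\]
which vanishes because forward difference operators in different directions commute. This also settles boundary cells. The horizontal periodicity keeps every difference well defined, and near the ground, the lid, or solid cells the identity is still evaluated cell by cell. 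The only requirement is that all six faces of $c$ and their twelve edges exist in $\mathcal{E}$, which holds because $\mathbf{A}$ is defined on all grid edges.

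The main obstacle is bookkeeping rather than mathematics. I need the signs $\sigma_{f,e}$ in Eq.~(\ref{eq:curl}) and the divergence signs $\tau_{c,f}$ to be induced by one consistent orientation, namely right-hand-rule circulation about the face's positive normal. With an inconsistent choice the cancellation would fail. I will therefore state this convention explicitly and then use the commuting-differences computation as the verification.
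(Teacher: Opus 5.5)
Your proposal is correct and follows the paper's proof. You fix a cell, substitute the curl into the face-flux divergence, and observe that each of the cube's twelve edges lies on exactly two of its faces with cancelling contributions, independently of $\mathbf{A}$ and of the mask. Your explicit outward signs $\tau_{c,f}$ make precise the paper's phrase ``opposite incidence signs $\sigma_{f,e}$'': it is the products $\tau_{c,f}\sigma_{f,e}$, not the raw $\sigma_{f,e}$, that are opposite. Your commuting forward-difference check, which matches the $D_\alpha$ form in Appendix~\ref{app:adjoint}, confirms this.
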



\paragraph{Shared Scalar Potential on the Boundary.}
The curl construction above guarantees local mass conservation for any $\mathbf{A}$, but wall impermeability further constrains its values on the boundary edges. Let $\mathcal{E}_{\partial}(g)\subset\mathcal{E}$ denote the set of boundary edges, including those incident to at least one solid cell and those on the ground and lid. These edges span a two-dimensional complex $\mathcal{B}$ with vertex set $\mathcal{N}$, whose elements are denoted by $\nu$. Every face in $\mathcal{F}_{\partial}(g)$ is bounded by four edges in $\mathcal{E}_{\partial}(g)$, so impermeability depends only on $\mathbf{A}|_{\mathcal{E}_{\partial}}$. In contrast to methods that impose wall impermeability directly on the predicted velocity~\citep{sukumar2022exact,higakienhancing}, the condition is enforced one degree down the discrete de Rham complex, as formalized in the following proposition.


\begin{proposition} \label{prop:exact}
Let $\{\chi^{(1)},\dots,\chi^{(\beta_1)}\}$ be a harmonic basis of a component of
$\mathcal{B}$ with first Betti number $\beta_1$, and let the boundary values of
$\mathbf{A}$ be taken in the form
\begin{equation} \label{eq:hodge}
    A_e \;=\; \frac{G_{e^{+}}-G_{e^{-}}}{h} \;+\; \sum_{m=1}^{\beta_1} \gamma_m\, \chi^{(m)}_e ,
    \qquad e\in\mathcal{E}_{\partial}(g) ,
\end{equation}
for an arbitrary $G:\mathcal{N}\to\mathbb{R}$ and arbitrary coefficients $\gamma_m$.
Then $\Delta u_f=0$ for every $f\in\mathcal{F}_{\partial}(g)$.
\end{proposition}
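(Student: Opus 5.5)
The argument is a direct computation: substitute Eq.~(\ref{eq:hodge}) into the curl readout Eq.~(\ref{eq:curl}) on a boundary face and show that each of the two pieces has zero circulation. Fix $f\in\mathcal{F}_{\partial}(g)$. As noted before the statement, all four edges of $\partial f$ lie in $\mathcal{E}_{\partial}(g)$, and $f$ is itself a $2$-cell of the complex $\mathcal{B}$. Since the four edges are connected, they all belong to the same component of $\mathcal{B}$, and the $\chi^{(m)}$ are harmonic cochains on that component. Hence $\Delta u_f=\frac1h\sum_{e\in\partial f}\sigma_{f,e}A_e$ involves only boundary values, and by linearity it splits into a gradient part and a harmonic part.

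\textbf{Gradient part.} Write $\frac{1}{h^2}\sum_{e\in\partial f}\sigma_{f,e}(G_{e^+}-G_{e^-})$. Because $\partial f$ is an oriented closed loop through four vertices $\nu_1\to\nu_2\to\nu_3\to\nu_4\to\nu_1$, the signed edge $\sigma_{f,e}e$ runs from $\nu_i$ to $\nu_{i+1}$. Its contribution is therefore $G_{\nu_{i+1}}-G_{\nu_i}$, whether $\sigma_{f,e}=+1$ (orientation agrees) or $-1$ (orientation reversed, so $e^+$ and $e^-$ swap). The sum telescopes to zero. This is the discrete identity ``curl of gradient vanishes'', i.e.\ $d_1 d_0=0$ restricted to $\mathcal{B}$. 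It holds for arbitrary $G$, with no assumption on the topology.

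\textbf{Harmonic part.} Here I use the definition of a harmonic basis on the $2$-complex $\mathcal{B}$. Each $\chi^{(m)}$ is a $1$-cochain in $\ker d_1\cap\ker d_0^{\top}$, so in particular it is closed: $(d_1\chi^{(m)})_f=\sum_{e\in\partial f}\sigma_{f,e}\chi^{(m)}_e=0$ for every $2$-cell $f$ of $\mathcal{B}$. The boundary faces are exactly the $2$-cells of $\mathcal{B}$, with the same incidence signs inherited from the grid. So $\sum_m\gamma_m(d_1\chi^{(m)})_f=0$ for any coefficients. Adding the two parts gives $\Delta u_f=0$.

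\textbf{Main obstacle.} The main subtlety is bookkeeping rather than analysis: making precise that the $2$-cells of $\mathcal{B}$ are exactly the faces in $\mathcal{F}_{\partial}(g)$, with orientations compatible with the $\sigma_{f,e}$ of Eq.~(\ref{eq:curl}). Closedness of $\chi^{(m)}$ is then evaluated on the right loops. The faces to check are building surfaces, ground, and lid, each with the four edges noted in the preliminaries. I would state this identification explicitly and note that closedness is intrinsic to the cochain complex of $\mathcal{B}$, independent of any choice of inner product in defining ``harmonic''. The Betti-number terms are not needed for the vanishing itself; they only enlarge the family of admissible boundary values beyond exact ones.
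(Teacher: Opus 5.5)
Your proposal is correct and follows the paper's proof: on each $f\in\mathcal{F}_{\partial}(g)$ the circulation splits into a discrete-gradient part that telescopes around $\partial f$, and a harmonic part that vanishes because harmonic $1$-cochains on the $2$-complex $\mathcal{B}$ are closed on its $2$-cells. Your extra bookkeeping makes explicit what the paper leaves implicit, namely that each boundary face is a $2$-cell of $\mathcal{B}$; only this inclusion is needed, not equality, and orientation compatibility is not a real obstacle since reversing $f$ only flips the sign of a sum that is zero.
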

Here, $G:\mathcal{N}\to\mathbb{R}$ is the shared scalar potential on the boundary vertices, $e^{+}$ and $e^{-}$ are the two endpoints of edge $e$, and $\gamma_m$ are the coefficients of the global circulation modes.
Proposition~\ref{prop:exact} turns impermeability into a recipe for the boundary values of $\mathbf{A}$: take the three Cartesian components on $\mathcal{E}_{\partial}(g)$ from the differences of one and the same scalar field, so that the circulation around every boundary face telescopes to zero. Predicting $G$ is then a problem posed on $\mathcal{B}$ rather than on the grid.

\paragraph{Assembly Operator.}
The two modules provide the interior values of $\mathbf{A}$, the boundary potential $G$, and the coefficients $\gamma_m$. The assembly operator retains the predicted values on the interior edges and replaces those on $\mathcal{E}_{\partial}(g)$ with Eq.~(\ref{eq:hodge}). The discrete curl in Eq.~(\ref{eq:curl}) is then applied to the assembled potential to obtain the velocity update $\Delta\mathbf{u}$. The layout enters $\Psi_g$ only through the boundary-edge set $\mathcal{E}_{\partial}(g)$, while the interior edges remain unconstrained.
We minimize the mean squared error between the updated field and the ground truth over the fluid cell, with no penalty on either condition of Eq.~(\ref{eq:pre-admissible}) because both hold for every value of the assembled vector potential. For the stability of autoregressive process, we introduce the pushforward scheme~\citep{brandstetter2022message}, whose perturbed inputs are predictions of the same map and are therefore admissible as well. 
The next two subsections describe in detail how the volume potential and the boundary potential are predicted.

\subsection{Multiresolution Volume Potential} \label{sec:volume}
\paragraph{Inputs.}
The volumetric module is conditioned on the $n$ most recent frames, the building layout and a single inflow scalar, which together form the channels of one input tensor. Each frame supplies its three velocity components. The layout supplies the occupancy mask and four horizontal distance fields, one per horizontal direction, each giving at every fluid cell the distance to the nearest solid cell along that direction, with periodic wrap-around in $x$ and $y$. The inflow supplies the surface heat flux broadcast over the grid, which carries the stratification; the direction and the speed are already visible in the frames. A three-dimensional U-Net encodes this tensor into feature maps at several resolutions.

\paragraph{Multiresolution Branches.}
Given a U-Net with $S$ resolution levels, we denote $\mathbf{f}_s$ for the features it computes at level $s$, which carry $C_s$ channels on the grid coarsened by $2^{s}$. Typically, the finest layer $\mathbf{f}_0$ serves as the output, and the deeper feature maps serve only as intermediates. We take a candidate potential from each level instead, so that every level reaches the output: 
\begin{equation} \label{eq:multires}
    \mathbf{a}_s \;=\; \mathcal{C}^{\mathrm{out}}_s\big(\mathcal{U}_s(\mathcal{C}^{\mathrm{in}}_s(\mathbf{f}_s))\big),
    s = 0,\dots,S-1 ,
\end{equation}
where $\mathcal{C}^{\mathrm{in}}_s$ is a convolution that mixes $\mathbf{f}_s$ at level $s$ and reduces its $C_s$ channels to a common width $c_{\mathcal{A}}$, $\mathcal{U}_s$ prolongates the result to the grid, and
$\mathcal{C}^{\mathrm{out}}_s$ is a convolution taking those $c_{\mathcal{A}}$ channels to the three Cartesian components. Both $\mathcal{C}^{\mathrm{in}}_0$ and $\mathcal{U}_0$ are the identity. The arrangement is that of a feature pyramid~\citep{lin2017feature}. A learned matrix $\Phi\in\mathbb{R}^{3\times 3S}$ applied at every grid point combines the $S$
candidates. We denote $[\,\cdot\,;\,\cdot\,]$ as concatenation, the volume vector potential $\mathbf{A}$ can be obtained by
\begin{equation} \label{eq:fuse}
    \mathbf{A} \;=\; \Phi\big(\left[\,\mathbf{a}_0\,;\,\dots\,;\,\mathbf{a}_{S-1}\,\right]\big).
\end{equation}

\begin{wrapfigure}{R}{0.5\textwidth}
  \vspace{-12pt}
  \centering
  \includegraphics[width=1\linewidth]{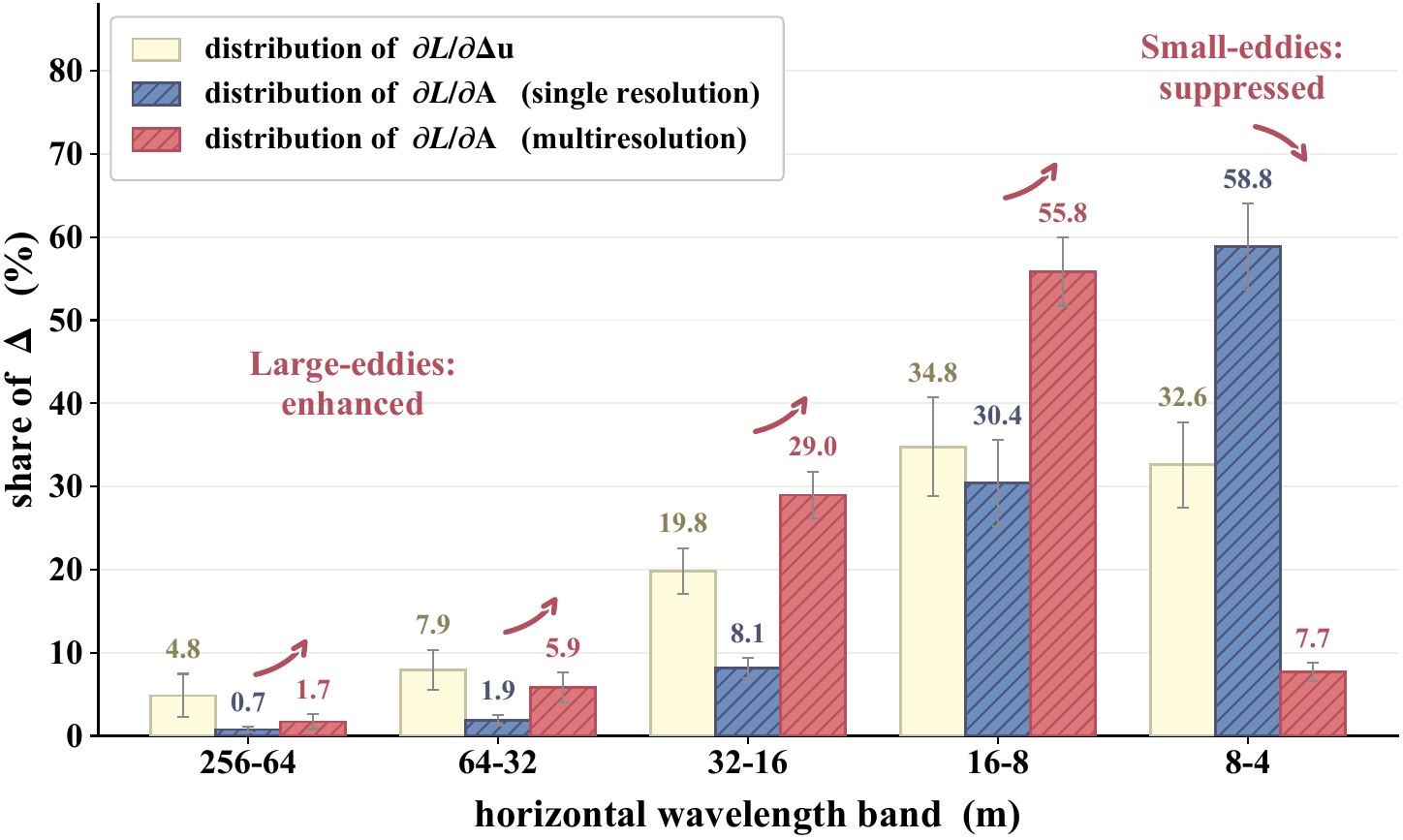}
  \caption{
  Share of the gradient in each horizontal wavelength band: $\Delta\mathbf{u}$ (left), single-resolution $\mathbf{A}$ (middle), and multiresolution $\mathbf{A}$ (right). The multiresolution read-out raises the share carried by the Large-eddies and lowers the share carried by the small ones, undoing part of the shift that backpropagation through the curl introduces.
  }
  \label{fig:bands}
  \vspace{-9pt}
\end{wrapfigure}

\paragraph{Why Multiresolution?}
\method{} predicts vector potential rather than the update directly. Gradients travel back to the potential through the adjoint of the curl, and that adjoint is not neutral: it multiplies each Fourier component of the gradient by a gain proportional to its wavenumber. The long wavelengths arrive attenuated and the short ones amplified, compared with the gradient the same loss would hand to a directly predicted velocity. Across the wavelengths the grid resolves, the two ends of that gain differ by a factor of order the domain size divided by the grid spacing. We provide the proof in Appendix~\ref{app:adjoint}. The effect is that different wavelengths learn at very different speeds. Producing the potential from the output of the U-Net alone, at the full resolution, every weight of the volumetric module is trained through this one reweighted gradient and no weight is dedicated to the long wavelengths. Those wavelengths carry the large-scale structure of the flow, and they are the part of the error that improves most slowly. Nothing prevents the parameterization from representing them; what suffers is the accuracy reached within a fixed training budget. 
Fig.~\ref{fig:bands} shows how the gradient that $\Delta\mathbf{u}$ and two type of $\mathbf{A}$ receive during backpropagation distributes over the wavelength bands.

\subsection{Surface Graph Network for the Boundary Potential} \label{sec:surface}

With these volumetric features, \method{} next explores the shared scalar potential $G$ on the boundary. The boundary potential lives on $\mathcal{B}$, where a three-dimensional convolution is a poor inductive bias: two fa\c{c}ades across a narrow street are neighbours in Euclidean distance, while the path between them on $\mathcal{B}$ descends one fa\c{c}ade, crosses the ground and ascends the other. We therefore predict $G$ with a network whose spatial operator is an aggregation along $\mathcal{E}_{\partial}(g)$. 

\paragraph{Learning Representations on the Boundary.}
We regard $\mathcal{B}$ as a graph with node set $\mathcal{N}$ and edge set $\mathcal{E}_{\partial}(g)$. Its nodes lie on a regular lattice, so each has at most six neighbors, one per axis direction $\delta\in\{1,\dots,6\}$, and an indicator $o_{\nu,\delta}\in\{0,1\}$ records whether that neighbour exists. Every node carries a static descriptor $\mathbf{d}_\nu\in\mathbb{R}^{d_{\mathrm{des}}}$ that collects the six indicators, the height of the node normalized by the domain height, and a slip fraction $\eta_\nu\in[0,1]$ that is zero on a no-slip surface and one on a free-slip one, obtained from the aerodynamic roughness length. 
Using the full-resolution output $\mathbf{f}_0$ of the volumetric module, we can obtain a representation of each cell. Then we introduce a learnable aggregation $\mathbf{p}_\nu\in\mathbb{R}^{8}$ to map cells to central nodes within $\mathcal{B}$. Let $\mathbf{X}_\nu\in\mathbb{R}^{8\times C_0}$ stack the eight rows $\mathbf{f}_0[c_j(\nu)]$ of the finest feature map of Section~\ref{sec:volume}, taken at the eight cells incident to $\nu$ in slots whose order encodes direction, and let $\mathbf{1}_\nu\in\{0,1\}^{8}$ mark the slots inside the domain. The node feature $\mathbf{F}_\nu\in\mathbb{R}^{C_0}$ can be obtained by:
\begin{equation} \label{eq:gather}
    \mathbf{p}_\nu \;=\; \mathrm{softmax}\big(\mathrm{MLP}(\mathbf{X}_\nu,\mathbf{d}_\nu) + \log\mathbf{1}_\nu\big),
    \qquad
    \mathbf{F}_\nu \;=\; \mathbf{X}_\nu^{\!\top}\mathbf{p}_\nu .
\end{equation}
The node features are then refined on $\mathcal{B}$. A graph module of $K$ layers starts from
$\mathbf{h}^{(0)}_\nu = \mathbf{W}_{\mathrm{in}}[\,\mathbf{F}_\nu\,;\,\mathbf{d}_\nu\,]$, and layer
$\tau$ updates every node by:
\begin{equation} \label{eq:gnn}
    \mathbf{h}^{(\tau)}_\nu \;=\; \mathbf{h}^{(\tau-1)}_\nu \;+\; \mathrm{SiLU}\Big(\mathbf{U}^{(\tau)}\big[\;\mathbf{h}^{(\tau-1)}_\nu \;;\; \sum_{\delta=1}^{6} o_{\nu,\delta}\,\mathbf{W}^{(\tau)}_{\delta}\,\mathbf{h}^{(\tau-1)}_{\mathrm{nbr}(\nu,\delta)}\;\big]\Big) ,
\end{equation}
where $\mathrm{nbr}(\nu,\delta)$ is the neighbour of $\nu$ in direction $\delta$. $\mathbf{W}_{\mathrm{in}}\in\mathbb{R}^{c_{\mathcal{B}}\times(C_0+d_{\mathrm{des}})}$, $\mathbf{W}^{(\tau)}_{\delta}\in\mathbb{R}^{c_{\mathcal{B}}\times c_{\mathcal{B}}}$ and $\mathbf{U}^{(\tau)}\in\mathbb{R}^{c_{\mathcal{B}}\times 2c_{\mathcal{B}}}$ are the learnable weights. Each direction carries its own $\mathbf{W}^{(\tau)}_{\delta}$, since a step along a fa\c{c}ade and a step around a corner are distinct operations on $\mathcal{B}$. We denote $\mathbf{H}\in\mathbb{R}^{|\mathcal{N}|\times c_{\mathcal{B}}}$ for the final states $\mathbf{h}^{(K)}_\nu$ stacked over the nodes.

\paragraph{Multiresolution Readout.} The reweighting of Section~\ref{sec:volume} reaches $G$ through a curl followed by a gradient, so its long wavelengths sit one power of the gain further from the loss than those of $\mathbf{A}$, as Remark~\ref{rem:gaintwice} makes precise. We therefore read $G$ out at several levels, as we do for $\mathbf{A}$:
\begin{equation} \label{eq:gread}
    g_s \;=\; \mathrm{pool}_s\big(\mathbf{H}\mathbf{w}_s\big),
    \qquad s = 0,\dots,S-1 ,
\end{equation}
where $\mathbf{w}_s\in\mathbb{R}^{c_{\mathcal{B}}}$ reads one value from each node, and $\mathrm{pool}_s$ averages it within the corresponding level-$s$ block followed by parameter-free neighbor averaging along $\mathcal{E}_{\partial}(g)$. The $S$ readouts are combined as in Eq.~(\ref{eq:fuse}). Since adding a component-wise constant to $G$ leaves the velocity update unchanged, we subtract the component-wise mean from the predicted boundary potential.

\paragraph{Harmonic Completion.} We specialize the harmonic part of Proposition~\ref{prop:exact} to our urban domain. The number of harmonic coefficients is determined by the topology of each component of $\mathcal{B}$. Here, $\mathcal{B}$ consists of the ground, the lid, together with the building surfaces. The lid is periodic in both horizontal directions and is therefore topologically a torus with first Betti number $\beta_1=2$, giving two harmonic coefficients, $\gamma_1$ and $\gamma_2$, associated with the two periodic directions. These modes are necessary because the gradient of $G$ has zero circulation around every closed loop and cannot represent these global circulations. In our setting, they control the mean horizontal wind. Summing Eq.~(\ref{eq:curl}) along a vertical column cancels the vertical differences in pairs, while the horizontal differences vanish under periodic averaging, giving
\begin{equation} \label{eq:meanflow}
    \langle \Delta u\rangle_{\mathrm{dom}} = -\frac{\langle A^y\rangle_{\mathrm{lid}}}{L_z},
    \qquad
    \langle \Delta v\rangle_{\mathrm{dom}} = +\frac{\langle A^x\rangle_{\mathrm{lid}}}{L_z} ,
\end{equation}
where $\langle\cdot\rangle_{\mathrm{dom}}$ and $\langle\cdot\rangle_{\mathrm{lid}}$ average over the domain and over the top layer of edges, and $L_z$ is the domain height. Setting $\gamma_1=\gamma_2=0$ would force both means to vanish and freeze the mean wind at the value of the first frame. We hence take them from the lid means of the horizontal components the volumetric module emits, and set those of the ground component to zero.
Substituting $G$ and the harmonic coefficients into Eq.~(\ref{eq:hodge}) gives the boundary values $\mathbf{A}|_{\mathcal{E}_{\partial}(g)}$. These values replace those produced by Eq.~(\ref{eq:fuse}) on $\mathcal{E}_{\partial}(g)$, while the interior values of $\mathbf{A}$ are retained. The assembly operator of Section~\ref{sec:rep} then applies the discrete curl once to the merged potential. By Propositions~\ref{prop:div} and~\ref{prop:exact}, the resulting update satisfies local mass conservation and wall impermeability for every layout and every value of $\boldsymbol{\theta}$. Finally, we can obtain the kinematically admissible wind velocity based on historical frames.

%% file: latex/chapters/4_experiment.tex
\section{Experiments}
\label{sec:exp}
In this section we evaluate \method{} on \bench{} by autoregressive wind-field prediction tasks. We report the mean squared error (MSE) of the predicted velocity over the fluid cells at several rollout horizons, and report two physical residuals to measure the kinematic admissibility: the maximum divergence $\max|\nabla\!\cdot\!\mathbf{u}|$, which measures whether mass is conserved, and the maximum normal velocity at solid surfaces $\max|\mathbf{u}\!\cdot\!\mathbf{n}|$, which measures whether air is allowed to pass through boundaries.
The two physical residuals are worst cases, taken over all test sequences and all four horizons. Details and supplementary experiments are provided in Appendix~\ref{app:data},~\ref{app:exp}, and~\ref{app:results}.

\begin{table}[tb]
\begin{minipage}{0.49\textwidth}
\centering
\vspace{-4mm}
\caption{Performance on the \textbf{unseen layouts} setting of \bench{}. MSE is reported at four rollout horizons. Highlighted are the top \textcolor{mydeepred}{first}, \textcolor{mydeepblue}{second} results. `$<$' in two constraint columns means the power-of-ten bound on the measured value, and $0$ is exact.}
\label{tab:id}
\renewcommand{\arraystretch}{1.2}
\resizebox{1\linewidth}{!}{
\begin{tabular}{l|cccc|cc}
    \toprule
    \multirow{2}{*}{Methods} & \multicolumn{4}{c|}{MSE$\downarrow$ ($\times 10^{-3}$)} & \multicolumn{2}{c}{worst over all $k$} \\\cmidrule{2-5}\cmidrule{6-7}
    & \multicolumn{1}{c}{$k{=}1$} & \multicolumn{1}{c}{$k{=}4$} & \multicolumn{1}{c}{$k{=}8$} & \multicolumn{1}{c|}{$k{=}16$} & \multicolumn{1}{c}{$\max|\nabla\!\cdot\!\mathbf{u}|\downarrow$} & \multicolumn{1}{c}{$\max|\mathbf{u}\!\cdot\!\mathbf{n}|\downarrow$} \\
    \midrule
    DMD & 55.27\small{$\pm$17.53} & 55.40\small{$\pm$17.66} & 63.10\small{$\pm$19.82} & 65.58\small{$\pm$20.46} & $<10^{-6}$ & $0$ \\
    Time-Mean & 29.84\small{$\pm$9.59} & 46.59\small{$\pm$14.63} & 52.06\small{$\pm$16.13} & 53.98\small{$\pm$16.61} & $<10^{-6}$ & $0$ \\
    POD & 54.75\small{$\pm$16.93} & 54.69\small{$\pm$16.92} & 54.87\small{$\pm$16.95} & 55.36\small{$\pm$17.21} & $<10^{-5}$ & 1.70 \\
    Dil-ResNet & 0.85\small{$\pm$0.43} & 8.36\small{$\pm$3.16} & 29.37\small{$\pm$8.94} & 69.03\small{$\pm$15.56} & 0.46 & 1.10 \\
    TFNO & 21.10\small{$\pm$7.11} & 35.49\small{$\pm$11.25} & 53.42\small{$\pm$14.94} & 91.05\small{$\pm$17.71} & 1.14 & 1.84 \\
    AFNO & 10.44\small{$\pm$3.71} & 28.27\small{$\pm$9.52} & 42.05\small{$\pm$13.41} & 60.85\small{$\pm$18.87} & 2.02 & 2.92 \\
    AViT & 7.69\small{$\pm$2.63} & 17.60\small{$\pm$6.30} & 32.47\small{$\pm$11.01} & 58.01\small{$\pm$17.46} & 1.52 & 1.64 \\
    P3D & 1.00\small{$\pm$0.49} & 5.85\small{$\pm$2.69} & 16.79\small{$\pm$7.48} & 43.44\small{$\pm$16.47} & 0.44 & 0.87 \\
    U-Net & \textcolor{mydeepblue}{0.61\small{$\pm$0.30}} & \textcolor{mydeepblue}{3.30\small{$\pm$1.60}} & \textcolor{mydeepblue}{9.06\small{$\pm$4.21}} & \textcolor{mydeepblue}{25.94\small{$\pm$10.32}} & 0.16 & 1.06 \\
    \hline\hline
    \textbf{\method{}} & \textcolor{mydeepred}{\textbf{0.22\small{$\pm$0.13}}} & \textcolor{mydeepred}{\textbf{1.07\small{$\pm$0.55}}} & \textcolor{mydeepred}{\textbf{3.89\small{$\pm$2.02}}} & \textcolor{mydeepred}{\textbf{15.91\small{$\pm$7.52}}} & $<10^{-6}$ & $<10^{-6}$ \\
    \bottomrule
\end{tabular}
}
\end{minipage}
\hfill
\begin{minipage}{0.49\textwidth}
\centering
\vspace{-4mm}
\caption{Performance on the \textbf{unseen morphologies} setting of \bench{}. MSE is reported at four rollout horizons. Highlighted are the top \textcolor{mydeepred}{first}, \textcolor{mydeepblue}{second} results. `$<$' in two constraint columns means the power-of-ten bound on the measured value, and $0$ is exact.}
\label{tab:ood1}
\renewcommand{\arraystretch}{1.2}
\resizebox{\linewidth}{!}{
\begin{tabular}{l|cccc|cc}
    \toprule
    \multirow{2}{*}{Methods} & \multicolumn{4}{c|}{MSE$\downarrow$ ($\times 10^{-3}$)} & \multicolumn{2}{c}{worst over all $k$} \\\cmidrule{2-5}\cmidrule{6-7}
    & \multicolumn{1}{c}{$k{=}1$} & \multicolumn{1}{c}{$k{=}4$} & \multicolumn{1}{c}{$k{=}8$} & \multicolumn{1}{c|}{$k{=}16$} & \multicolumn{1}{c}{$\max|\nabla\!\cdot\!\mathbf{u}|\downarrow$} & \multicolumn{1}{c}{$\max|\mathbf{u}\!\cdot\!\mathbf{n}|\downarrow$} \\
    \midrule
    DMD & 68.62\small{$\pm$17.04} & 68.61\small{$\pm$17.11} & 79.65\small{$\pm$17.42} & 79.47\small{$\pm$15.27} & $<10^{-6}$ & $0$ \\
    Time-Mean & 36.77\small{$\pm$9.47} & 58.37\small{$\pm$13.67} & 66.23\small{$\pm$13.28} & 65.53\small{$\pm$11.70} & $<10^{-6}$ & $0$ \\
    POD & 80.82\small{$\pm$21.94} & 80.97\small{$\pm$22.11} & 81.72\small{$\pm$22.41} & 83.71\small{$\pm$22.99} & $<10^{-5}$ & 1.86 \\
    Dil-ResNet & 1.30\small{$\pm$0.71} & 12.24\small{$\pm$4.93} & 39.94\small{$\pm$11.54} & 86.07\small{$\pm$17.04} & 0.47 & 1.15 \\
    TFNO & 27.31\small{$\pm$8.10} & 46.72\small{$\pm$12.84} & 66.60\small{$\pm$15.25} & 102.29\small{$\pm$16.90} & 1.22 & 2.03 \\
    AFNO & 13.89\small{$\pm$4.84} & 37.66\small{$\pm$11.05} & 57.53\small{$\pm$14.49} & 86.32\small{$\pm$21.09} & 3.28 & 4.62 \\
    AViT & 10.58\small{$\pm$3.46} & 27.09\small{$\pm$8.50} & 51.49\small{$\pm$13.86} & 86.27\small{$\pm$18.79} & 2.09 & 1.53 \\
    P3D & 1.58\small{$\pm$0.84} & 9.25\small{$\pm$4.42} & 26.45\small{$\pm$10.55} & 63.19\small{$\pm$16.99} & 1.37 & 0.99 \\
    U-Net & \textcolor{mydeepblue}{0.94\small{$\pm$0.52}} & \textcolor{mydeepblue}{5.57\small{$\pm$2.96}} & \textcolor{mydeepblue}{15.52\small{$\pm$7.56}} & \textcolor{mydeepblue}{41.33\small{$\pm$15.35}} & 0.20 & 1.05 \\
    \hline\hline
    \textbf{\method{}} & \textcolor{mydeepred}{\textbf{0.32\small{$\pm$0.21}}} & \textcolor{mydeepred}{\textbf{1.85\small{$\pm$1.03}}} & \textcolor{mydeepred}{\textbf{6.61\small{$\pm$3.34}}} & \textcolor{mydeepred}{\textbf{25.10\small{$\pm$9.88}}} & $<10^{-6}$ & $<10^{-6}$ \\
    \bottomrule
\end{tabular}
}
\end{minipage}
\end{table}

\subsection{Experiment Setup}
\label{exp:setting}

\paragraph{Evaluation Settings.}

\begin{table}[t]
\vskip -0.1in
\centering
\caption{Statistics of three settings on \bench{}.}
\label{tab:splits}
\small
\begin{tabular}{lccccc}
\toprule
Setting & \#layouts & building density & \#inflows & \#snapshots & \#samples \\
\midrule
Unseen layouts      & 30 & 0.17--0.38 & 1 & 3,000 & 1,908 \\
Unseen morphologies & 34 & 0.11--0.38 & 1 & 3,400 & 1,980 \\
Unseen inflows      &  5 & 0.11--0.38 & 9 & 4,500 & 2,520 \\
\bottomrule
\end{tabular}
\vskip -0.2in
\end{table}

We construct three evaluation settings on \bench{}. \textbf{Unseen layouts} uses 1,608 / 192 samples for training / validation under one inflow, while the 108 samples on the 6 held-out layouts (same morphology) are used for testing. \textbf{Unseen morphologies} uses 1,608 / 192 samples for training / validation under one inflow, while the 180 samples on the 10 layouts (unseen morphology) are used for testing. \textbf{Unseen inflows} uses 2,010 / 240 samples for training / validation under 9 inflow conditions over wind direction, free-stream speed and stability, while the 270 samples on the 3 held-out conditions are used for testing. Every test sequence is rolled out from the same 18 start frames, and each MSE entry reports the mean and standard deviation over these rollouts. The statistics of every setting are given in Table~\ref{tab:splits}.

\paragraph{Baselines.}
We compare our method with nine wind-field predicting methods, which can be divided into three groups:
\begin{enumerate}[0]
\item[$\bullet$] Reduced-order methods include the temporal mean of the input window (Time-Mean), dynamic mode decomposition (DMD)~\citep{schmid2010dynamic,tu2013dynamic}, and proper orthogonal decomposition (POD)~\citep{berkooz1993proper}. They predict from linear combinations of observed snapshots.
\item[$\bullet$] Convolutional surrogates include the dilated residual network (Dil-ResNet)~\citep{stachenfeld2021learned}, the pseudo-3D residual network (P3D)~\citep{holzschuh2025p3d}, and a three-level 3D U-Net with ConvNeXt blocks (U-Net)~\citep{ohana2024well}.
\item[$\bullet$] Spectral and attention operators include the tensorized Fourier neural operator (TFNO)~\citep{li2020fourier,kossaifi2023multi}, the adaptive Fourier neural operator (AFNO)~\citep{guibas2021adaptive}, and the axial vision transformer (AViT)~\citep{ohana2024well}.
\end{enumerate}

\subsection{Experimental Results}
\label{main}

\paragraph{Main Results.}
Tables~\ref{tab:id} and~\ref{tab:ood1} compare all ten methods on the unseen layout and the unseen morphology settings, and Table~\ref{tab:ood2} in Appendix~\ref{app:main_sup} reports the unseen inflow setting. Against the strongest baseline, \method{} lowers the error at $k{=}16$ by 29\% averaging on the three settings, and by 61.9\% at $k{=}1$, so the relative margin is widest on the short horizons. Time-Mean and DMD build their prediction from historical snapshots and therefore inherit exact physical residuals, but they carry no dynamics and perform poorly throughout. POD draws its basis from the training sequences whose impermeable boundaries lie elsewhere. U-Net is the most accurate among learnable baselines, yet its worst face lets air through at 1.06, 1.05 and 1.30 times the free-stream speed, more than five orders of magnitude above \method{}. AFNO diverges outright under inflow shift, to an MSE of 1.6${\times}10^{3}$ with a divergence of 5.9${\times}10^{2}$. The \textbf{unseen inflows} setting is the hardest for every learned model, to the point that POD overtakes all of them at $k{=}16$, while \method{} still leads by 11.7\%. In summary, \method{} consistently outperforms both reduced-order methods and neural surrogates across all settings, demonstrating the nested potential framework's effectiveness in accuracy and kinematic admissibility.

\begin{table}[t]
\begin{minipage}{0.49\textwidth}
\centering
\vspace{-4mm}
\caption{Ablation on the \textbf{unseen layouts} setting of \bench{}. MSE is reported at two rollout horizons. Highlighted are the top \textcolor{mydeepred}{first}, \textcolor{mydeepblue}{second} results. `$<$' in two constraint columns means the power-of-ten bound on the measured value, and $0$ is exact.}
\label{tab:ablation}
\renewcommand{\arraystretch}{1.15}
\resizebox{\textwidth}{!}{
\begin{tabular}{l|cc|cc|cc}
    \toprule
    \multirow{2}{*}{Variant} & \multicolumn{2}{c|}{MSE$\downarrow$} & \multicolumn{2}{c|}{$\max|\nabla\!\cdot\!\mathbf{u}|\downarrow$} & \multicolumn{2}{c}{$\max|\mathbf{u}\!\cdot\!\mathbf{n}|\downarrow$} \\\cmidrule{2-7}
    & \multicolumn{1}{c}{$k{=}1$} & \multicolumn{1}{c|}{$k{=}16$} & \multicolumn{1}{c}{$k{=}1$} & \multicolumn{1}{c|}{$k{=}16$} & \multicolumn{1}{c}{$k{=}1$} & \multicolumn{1}{c}{$k{=}16$} \\
    \midrule
    \textbf{\method{}} & \textcolor{mydeepred}{\textbf{0.22\small{$\pm$0.13}}} & \textcolor{mydeepred}{\textbf{15.91\small{$\pm$7.52}}} & $<10^{-6}$ & $<10^{-6}$ & $<10^{-7}$ & $<10^{-6}$ \\
    \hline\hline
    w/o null-space $\Psi_g$ & 0.43\small{$\pm$0.24} & 31.03\small{$\pm$12.05} & 0.20 & 1.24 & 1.58 & 2.00 \\
    w/ soft penalty & 25.00\small{$\pm$4.58} & 112.95\small{$\pm$10.40} & 0.14 & 0.36 & 0.76 & 1.02 \\
    w/ post-projection & 0.38\small{$\pm$0.22} & 25.16\small{$\pm$9.81} & $<10^{-6}$ & $<10^{-6}$ & $0$ & $0$ \\
    w/o boundary $G$ & 0.27\small{$\pm$0.18} & 19.94\small{$\pm$8.29} & $<10^{-6}$ & $<10^{-6}$ & 0.71 & 0.80 \\
    w/o increment $\Delta\mathbf{u}$ & 13.63\small{$\pm$4.61} & 2867.98\small{$\pm$265.31} & $<10^{-6}$ & $<10^{-5}$ & $<10^{-6}$ & $<10^{-5}$ \\
    w/o multi-scale $\mathbf{A}$ & \textcolor{mydeepblue}{0.23\small{$\pm$0.15}} & \textcolor{mydeepblue}{16.20\small{$\pm$6.87}} & $<10^{-6}$ & $<10^{-6}$ & $<10^{-7}$ & $<10^{-6}$ \\
    w/o learned fusion & 0.26\small{$\pm$0.18} & 18.46\small{$\pm$7.98} & $<10^{-6}$ & $<10^{-6}$ & $<10^{-7}$ & $<10^{-6}$ \\
    \bottomrule
\end{tabular}
}
\end{minipage}
\hfill
\begin{minipage}{0.49\textwidth}
\centering
\vspace{-4mm}
\caption{Ablation on the \textbf{unseen morphologies} setting of \bench{}. MSE is reported at two rollout horizons. Highlighted are the top \textcolor{mydeepred}{first}, \textcolor{mydeepblue}{second} results. `$<$' in two constraint columns means the power-of-ten bound on the measured value, and $0$ is exact.}
\label{tab:ablation_ood1}
\renewcommand{\arraystretch}{1.15}
\resizebox{\textwidth}{!}{
\begin{tabular}{l|cc|cc|cc}
    \toprule
    \multirow{2}{*}{Variant} & \multicolumn{2}{c|}{MSE$\downarrow$} & \multicolumn{2}{c|}{$\max|\nabla\!\cdot\!\mathbf{u}|\downarrow$} & \multicolumn{2}{c}{$\max|\mathbf{u}\!\cdot\!\mathbf{n}|\downarrow$} \\\cmidrule{2-7}
    & \multicolumn{1}{c}{$k{=}1$} & \multicolumn{1}{c|}{$k{=}16$} & \multicolumn{1}{c}{$k{=}1$} & \multicolumn{1}{c|}{$k{=}16$} & \multicolumn{1}{c}{$k{=}1$} & \multicolumn{1}{c}{$k{=}16$} \\
    \midrule
    \textbf{\method{}} & \textcolor{mydeepred}{\textbf{0.32\small{$\pm$0.21}}} & \textcolor{mydeepred}{\textbf{25.10\small{$\pm$9.88}}} & $<10^{-6}$ & $<10^{-6}$ & $<10^{-7}$ & $<10^{-6}$ \\
    \hline\hline
    w/o null-space $\Psi_g$ & 0.69\small{$\pm$0.41} & 45.84\small{$\pm$13.89} & 0.23 & 1.03 & 1.69 & 2.20 \\
    w/ soft penalty & 34.05\small{$\pm$2.83} & 150.08\small{$\pm$15.74} & 0.14 & 0.34 & 1.13 & 1.45 \\
    w/ post-projection & 0.59\small{$\pm$0.38} & 38.01\small{$\pm$12.52} & $<10^{-6}$ & $<10^{-6}$ & $0$ & $0$ \\
    w/o boundary $G$ & 0.39\small{$\pm$0.28} & 29.95\small{$\pm$11.58} & $<10^{-6}$ & $<10^{-6}$ & 0.95 & 0.95 \\
    w/o increment $\Delta\mathbf{u}$ & 22.23\small{$\pm$5.58} & 2891.92\small{$\pm$366.57} & $<10^{-6}$ & $<10^{-5}$ & $<10^{-6}$ & $<10^{-5}$ \\
    w/o multi-scale $\mathbf{A}$ & \textcolor{mydeepblue}{0.33\small{$\pm$0.23}} & \textcolor{mydeepblue}{27.30\small{$\pm$10.89}} & $<10^{-6}$ & $<10^{-6}$ & $<10^{-7}$ & $<10^{-6}$ \\
    w/o learned fusion & 0.37\small{$\pm$0.27} & 29.03\small{$\pm$11.52} & $<10^{-6}$ & $<10^{-6}$ & $<10^{-7}$ & $<10^{-6}$ \\
    \bottomrule
\end{tabular}
}
\end{minipage}
\end{table}

\paragraph{Ablation Study.}
To evaluate the efficacy of \method{}'s components, we conduct an ablation study comparing seven variant configurations, with comprehensive details provided in Appendix~\ref{app:variants} and quantitative results presented in Tables~\ref{tab:ablation}, Table~\ref{tab:ablation_ood1}, and Table~\ref{tab:ablation_ood2} of Appendix~\ref{app:ablation_sup}. The variant \textbf{w/o null-space $\Psi_g$} raises the error and brings violations of constraints. Moving the constraints into the loss \textbf{w/ soft penalty} is worse on both counts. Repairing the field afterwards \textbf{w/ post-projection} does restore both constraints exactly, yet it costs a linear solve at every step and trails \method{}. 
Within the parameterization, the shared boundary potential is what makes the walls exact: removing it (\textbf{w/o boundary $G$}) leaves the divergence at the same numerical floor but lets air through the walls at up to 1.72 times the free-stream speed, at a cost of 19.3--39.7\% in 16-step error. Predicting the updates rather than the absolute field is what keeps the rollout stable. 
Removing any one leads to performance declines, underscoring the critical contributions of these components.

\begin{figure}[t]
\vskip -0.1in
\centering
\includegraphics[width=\textwidth]{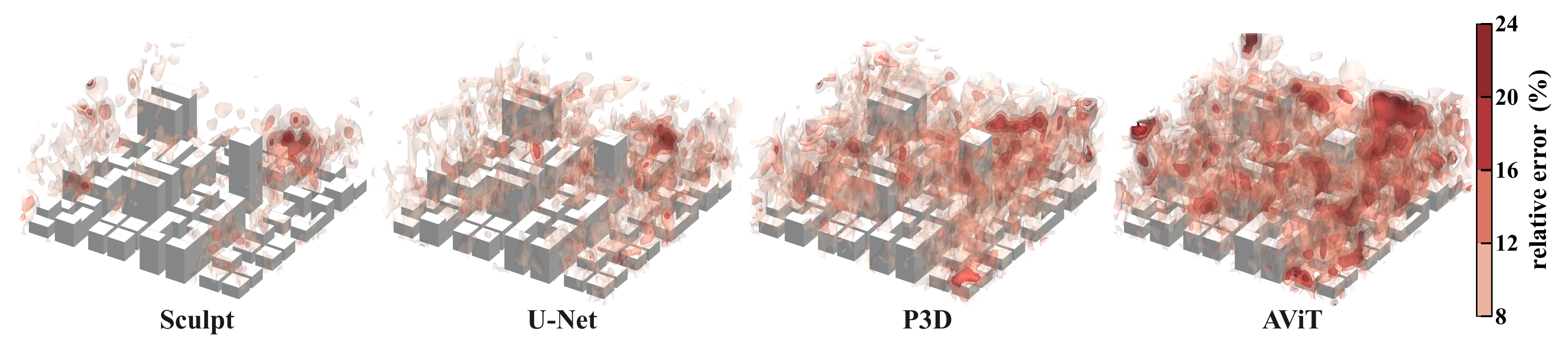}
\caption{Isosurfaces of the prediction error at $k{=}16$ on an unseen inflow case, for \method{} and the three learned baselines. Buildings are grey.}
\label{fig:err3d}
\vskip -0.2in
\end{figure}

\paragraph{Visualizations.}
Figure~\ref{fig:err3d} draws the 16-step rollout error as isosurfaces, which demonstrate that \method{} keeps the error confined to a far smaller volume. Meanwhile, we can find where methods fail. Every one of them is least accurate in the roof-level shear layer: measured against the local wind speed, the mean error of \method{} falls from 9.3\% in the two cells adjacent to a wall to 3.3\% beyond 16\,m from any wall, and the same monotone decay holds for U-Net, P3D and AViT. The region where all four exceed 12\% of the free stream sits at a mean height of 21.5\,m, above the mean roof of 12.9\,m and below the tallest building at 36\,m, which is where the flow separating from the roofs meets the faster air above them. 
One explanation is that shear layer is also the least steady one, which would make it difficult for any surrogate. How to address the problem is left to future work.

%% file: latex/chapters/5_conclusion.tex
\section{Conclusion}
This paper studies how a neural surrogate for transient urban wind can keep its predictions compatible with the geometry. Local mass conservation and wall impermeability constrain shared velocity field, so enforcing one of them disturbs the other. We formulate their joint satisfaction as kinematic admissibility and propose \method{}, which reaches it by a nested volume-boundary potential framework: a volumetric module learns vector potential for fluid cell to get divergence-free, while a surface graph network to learn the scalar potential on the boundary to satisfy the wall impermeability. The two levels assemble into one potential field from which the velocity update is read out, and the guarantee holds inherently.
To measure accuracy and admissibility together we release \bench{}, a building-resolving large-eddy simulation dataset with three evaluation settings. Across them \method{} lowers MSE by 11.7–39.3\% against nine baselines while holding both residuals below $10^{-6}$.

%% file: latex/chapters/6_others.tex
\subsection*{AI use statement}
In this work, the conceptual framework and core ideas presented in this paper represent the authors' original contributions. We used generative AI tools solely for language-related assistance, including manuscript translation and linguistic polishing. We have not used generative AI tools for developing the methodology, designing algorithms, conducting experiments, analyzing results, or generating scientific claims, and other generative AI-related tasks are not applicable to this work. We have reviewed all AI-assisted text and ensured that it accurately reflects the authors' original ideas and contributions. The authors take full responsibility for the final content of this work, including all text, claims, and artifacts produced with the assistance of generative AI tools.

\subsection*{Reproducibility statement}
To facilitate reproducibility, we provide the anonymous code link of the source code and detailed implementation of \method{}. The model configurations, hyper-parameters, training details, and computational resources are provided in Appendix~\ref{app:details}. All datasets used in our experiments are publicly available in our link, with their statistics summarized in Section~\ref{sec:exp}.

%% file: latex/chapters/8_ack.tex

%% file: latex/chapters/7_appendix.tex
\clearpage
\newpage
\appendix

This appendix contains details about proofs, dataset, experimental implementation, supplementary experiments, related works, limitations, and future works.


\section{Proof} \label{proof}

\subsection{Proofs of Propositions~\ref{prop:div} and~\ref{prop:exact}} \label{app:constructions}

\begin{proof}[Proof of Proposition~\ref{prop:div}]
Fix a cell $c$ and sum Eq.~(\ref{eq:curl}) over its six oriented faces. Every edge of $c$ is shared by exactly two of those faces and enters them with opposite incidence signs $\sigma_{f,e}$, so all edge contributions cancel in pairs and $(\nabla_h\!\cdot\!\Delta\mathbf{u})_c=0$. The argument uses only the incidence structure of the grid, hence holds for every $\mathbf{A}$ and every mask $g$.
\end{proof}

\begin{proof}[Proof of Proposition~\ref{prop:exact}]
Every $f\in\mathcal{F}_{\partial}(g)$ has all four of its edges in $\mathcal{E}_{\partial}(g)$, so Eq.~(\ref{eq:curl}) evaluates $\Delta u_f$ from boundary values alone. The first term of Eq.~(\ref{eq:hodge}) is a discrete gradient, whose circulation around the oriented boundary of $f$ telescopes to zero. The second term is a closed $1$-cochain on $\mathcal{B}$, so its circulation around the $2$-cell $f$ vanishes by definition of closedness. Summing the two gives $\Delta u_f=0$.
\end{proof}

\subsection{Spectral Gain of the Curl Readout} \label{app:adjoint}

This appendix proves the statement made in Section~\ref{sec:volume}, that eliminating the constraints through a potential attenuates the gradient signal that reaches the potential at large scales and amplifies it at small ones. 

{\bfseries Setting and Notation.}
Throughout this appendix we work on the uniform grid of spacing $h$ with periodic boundary conditions in all three directions, which isolates the effect of the readout from that of the boundary treatment of Section~\ref{sec:rep}. Write $\mathcal{R}$ for the linear map that assembles the update from the volume vector potential, so that $\Delta\mathbf{u}=\mathcal{R}\mathbf{A}$ is Eq.~(\ref{eq:curl}) written for all faces at once, and write $\mathcal{R}^{\!\top}$ for its adjoint. For any loss $\mathcal{L}$ the chain rule gives
\begin{equation} \label{eq:adjoint}
    \frac{\partial \mathcal{L}}{\partial \mathbf{A}}
    \;=\; \mathcal{R}^{\!\top}\,\mathbf{r},
    \qquad
    \mathbf{r} \;=\; \frac{\partial \mathcal{L}}{\partial \Delta\mathbf{u}} ,
\end{equation}
so the quantity to be characterized is the gain of $\mathcal{R}^{\!\top}$ as a function of spatial scale. Let $\boldsymbol{\xi}=(\xi_1,\xi_2,\xi_3)$ denote an admissible wavevector of the periodic grid, with $\xi_\alpha\in(-\pi/h,\pi/h]$ for $\alpha\in\{1,2,3\}$, and write $\mathbf{A}_{\boldsymbol{\xi}}$ and $\mathbf{r}_{\boldsymbol{\xi}}$ for the discrete Fourier coefficients of $\mathbf{A}$ and of $\mathbf{r}$ at $\boldsymbol{\xi}$. On a uniform periodic grid, $\mathcal{R}$ is translation-invariant and hence is block-diagonalized by the discrete Fourier transform, with one $3\!\times\!3$ block $\mathcal{R}_{\boldsymbol{\xi}}$ per wavevector. Write $D_\alpha$ for the one-sided difference along axis $\alpha$ and define
\begin{equation} \label{eq:symbol}
    q_\alpha(\boldsymbol{\xi}) \;=\; \frac{e^{\mathrm{i}\xi_\alpha h}-1}{h},
    \qquad
    |q_\alpha(\boldsymbol{\xi})| \;=\; \frac{2}{h}\Big|\sin\frac{\xi_\alpha h}{2}\Big| ,
\end{equation}
so that $q_\alpha$ is the symbol of $D_\alpha$. We collect the three symbols into the vector $\mathbf{q}(\boldsymbol{\xi})$.

\begin{proposition} \label{prop:gain}
The block $\mathcal{R}_{\boldsymbol{\xi}}$ is the cross-product matrix of $\mathbf{q}(\boldsymbol{\xi})$,
and for every $\boldsymbol{\xi}\neq\mathbf{0}$,
\begin{equation} \label{eq:gram}
    \mathcal{R}_{\boldsymbol{\xi}}^{*}\mathcal{R}_{\boldsymbol{\xi}} \;=\; |\mathbf{q}(\boldsymbol{\xi})|^{2}\,
    \big(\mathbf{I}-\Pi_{\boldsymbol{\xi}}\big),
    \qquad
    \Pi_{\boldsymbol{\xi}} \;=\; \frac{\mathbf{q}(\boldsymbol{\xi})\,\mathbf{q}(\boldsymbol{\xi})^{*}}{|\mathbf{q}(\boldsymbol{\xi})|^{2}} ,
\end{equation}
where $\Pi_{\boldsymbol{\xi}}$ is the orthogonal projector onto the span of $\mathbf{q}(\boldsymbol{\xi})$, which is the range of the discrete gradient at that wavevector. Consequently the singular values of $\mathcal{R}_{\boldsymbol{\xi}}$ are $|\mathbf{q}(\boldsymbol{\xi})|$ twice and $0$ once, and the same holds for $\mathcal{R}_{\boldsymbol{\xi}}^{*}$.
\end{proposition}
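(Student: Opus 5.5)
The plan is to compute the Fourier symbol of the staggered curl directly, and then read the Gram identity off the standard cross-product identity $[\mathbf{q}]_\times^{*}[\mathbf{q}]_\times = |\mathbf{q}|^2\mathbf{I}-\bar{\mathbf{q}}\mathbf{q}^{\top}$, taking care with complex conjugation.

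\textbf{Step 1: diagonalize $\mathcal{R}$.} On the C-grid, edge $A^x$ sits at $(i+\tfrac12,j,k)$ and the face $u$ at $(i,j+\tfrac12,k+\tfrac12)$, and analogously for the other components. Equation~(\ref{eq:curl}) then reads componentwise as
\begin{align*}
\Delta u &= D_2 A^z - D_3 A^y,\\
\Delta v &= D_3 A^x - D_1 A^z,\\
\Delta w &= D_1 A^y - D_2 A^x .
\end{align*}
Each $D_\alpha$ is the one-sided difference taking the edge lattice to the face lattice, with the incidence signs $\sigma_{f,e}$ producing exactly this pattern. I fix the Fourier convention $A^\beta_{\mathbf{x}}=\sum_{\boldsymbol\xi}A^\beta_{\boldsymbol\xi}e^{\mathrm{i}\boldsymbol\xi\cdot\mathbf{x}}$, using each component's own staggered positions. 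Then $D_\alpha$ multiplies the coefficient by $q_\alpha(\boldsymbol\xi)$, up to a unimodular phase $e^{\mathrm{i}\xi_\alpha h/2}$ that arises from the half-cell shift. There are two ways to handle that phase. One is to absorb it into the basis, i.e.\ index the coefficients at the lower-left corner, so that the symbol is exactly $q_\alpha$. The other is to note that a diagonal unitary change of basis does not affect the Gram matrix's form. The result is $\mathcal{R}_{\boldsymbol\xi}=[\mathbf{q}]_\times$, with $[\mathbf{q}]_\times\mathbf{a}=\mathbf{q}\times\mathbf{a}$. This is the first claim.

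\textbf{Step 2: compute the Gram matrix.} Because the $q_\alpha$ are complex, I will verify the identity directly. Since $[\mathbf{q}]_\times$ is antisymmetric, $[\mathbf{q}]_\times^{*}=-[\bar{\mathbf{q}}]_\times$. So for any $\mathbf{a}$,
\[
\mathcal{R}_{\boldsymbol\xi}^*\mathcal{R}_{\boldsymbol\xi}\mathbf{a}=-\bar{\mathbf{q}}\times(\mathbf{q}\times\mathbf{a})=-\mathbf{q}(\bar{\mathbf{q}}\cdot\mathbf{a})+\mathbf{a}(\bar{\mathbf{q}}\cdot\mathbf{q}),
\]
by the BAC–CAB rule, which holds for complex vectors with bilinear dot products. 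Here $\bar{\mathbf{q}}\cdot\mathbf{q}=|\mathbf{q}|^2$.

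\textbf{Main obstacle: the sign of the rank-one term.} The rank-one term as computed is $\mathbf{q}\bar{\mathbf{q}}^{\top}=\mathbf{q}\mathbf{q}^{*}$, which has the right orientation, and this gives $|\mathbf{q}|^2(\mathbf{I}-\Pi_{\boldsymbol\xi})$. I will double-check it by testing on $\mathbf{a}=\mathbf{q}$. Then $\mathbf{q}\times\mathbf{q}=0$, so the image must vanish, and $\Pi_{\boldsymbol\xi}\mathbf{q}=\mathbf{q}$ is consistent with this. The Step 1 phase, if it is kept instead of absorbed, must be applied consistently to both $\mathbf{A}$ and $\Delta\mathbf{u}$ so that this identity is not spoiled. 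I consider that bookkeeping the real risk in the proof. Note also that $\mathbf{q}\neq 0$ whenever $\boldsymbol\xi\neq 0$ in the range $(-\pi/h,\pi/h]$, since then some $\sin(\xi_\alpha h/2)\neq 0$, so $\Pi_{\boldsymbol\xi}$ is well defined.

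\textbf{Step 3: singular values.} $\mathbf{I}-\Pi_{\boldsymbol\xi}$ is an orthogonal projector of rank 2. The eigenvalues of $\mathcal{R}_{\boldsymbol\xi}^*\mathcal{R}_{\boldsymbol\xi}$ are therefore $|\mathbf{q}|^2$ with multiplicity two and $0$ once, and the singular values are $|\mathbf{q}|,|\mathbf{q}|,0$. A square matrix and its adjoint have the same singular values, which gives the statement for $\mathcal{R}_{\boldsymbol\xi}^*$. Finally, $\operatorname{span}\mathbf{q}$ is the range of the discrete gradient symbol $(q_1,q_2,q_3)^\top$, which identifies $\Pi_{\boldsymbol\xi}$ as stated.
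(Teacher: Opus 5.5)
Your proposal is correct and follows the paper's proof essentially step for step. The componentwise staggered curl with symbols $q_\alpha$ gives $\mathcal{R}_{\boldsymbol{\xi}}=[\mathbf{q}]_\times$, antisymmetry gives $\mathcal{R}_{\boldsymbol{\xi}}^{*}=-[\bar{\mathbf{q}}]_\times$, the bilinear BAC--CAB identity yields $|\mathbf{q}|^2\mathbf{I}-\mathbf{q}\mathbf{q}^{*}$, and the rank of the projector then gives the singular values.

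There is one slip and one point to tighten. Your opening sentence writes the rank-one term as $\bar{\mathbf{q}}\mathbf{q}^{\top}$, which is the transpose of the $\mathbf{q}\bar{\mathbf{q}}^{\top}=\mathbf{q}\mathbf{q}^{*}$ you correctly derive in Step~2. Also, the lower-left indexing is what makes $\Pi_{\boldsymbol{\xi}}$ literally the projector onto $\operatorname{span}\mathbf{q}$; the ``diagonal unitary'' alternative preserves the singular values but replaces $\mathbf{q}$ in $\Pi_{\boldsymbol{\xi}}$ by a phase-rotated vector.
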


\begin{proof}
On the staggered grid the three components of Eq.~(\ref{eq:curl}) read $\Delta u = D_2 A^{3}-D_3 A^{2}$, $\Delta v = D_3 A^{1}-D_1 A^{3}$ and $\Delta w = D_1 A^{2}-D_2 A^{1}$, where $A^{\alpha}$ collects the entries of $\mathbf{A}$ on the edges aligned with axis $\alpha$. Applying the discrete Fourier transform turns $D_\alpha$ into multiplication by $q_\alpha(\boldsymbol{\xi})$ of Eq.~(\ref{eq:symbol}), so $\mathcal{R}_{\boldsymbol{\xi}}\mathbf{A}_{\boldsymbol{\xi}} =\mathbf{q}(\boldsymbol{\xi})\times\mathbf{A}_{\boldsymbol{\xi}}$, which is the first claim. Cross-product matrices are antisymmetric, so $\mathcal{R}_{\boldsymbol{\xi}}^{*}=-[\bar{\mathbf{q}}]_{\times}$, and the bilinear identity $[\mathbf{a}]_{\times}[\mathbf{b}]_{\times}=\mathbf{b}\mathbf{a}^{\top}-(\mathbf{a}^{\top}\mathbf{b})\mathbf{I}$ gives
\begin{equation}
    \mathcal{R}_{\boldsymbol{\xi}}^{*}\mathcal{R}_{\boldsymbol{\xi}} \;=\; -[\bar{\mathbf{q}}]_{\times}[\mathbf{q}]_{\times}
    \;=\; (\bar{\mathbf{q}}^{\top}\mathbf{q})\,\mathbf{I}-\mathbf{q}\,\bar{\mathbf{q}}^{\top}
    \;=\; |\mathbf{q}|^{2}\mathbf{I}-\mathbf{q}\mathbf{q}^{*} ,
\end{equation}
which is Eq.~(\ref{eq:gram}). The matrix $\mathbf{q}\mathbf{q}^{*}/|\mathbf{q}|^{2}$ is Hermitian, idempotent and of rank one, hence the orthogonal projector onto $\operatorname{span}(\mathbf{q})$, and $\mathbf{q}$ is exactly the Fourier block of the discrete gradient, whose entries are the same one-sided differences. Eq.~(\ref{eq:gram}) therefore has eigenvalues $|\mathbf{q}|^{2}$ with multiplicity two and $0$ with multiplicity one, which are the squared singular values of $\mathcal{R}_{\boldsymbol{\xi}}$. At $\boldsymbol{\xi}=\mathbf{0}$ one has $\mathbf{q}=\mathbf{0}$ and $\mathcal{R}_{\mathbf{0}}=0$, so a spatially constant potential produces no update.
\end{proof}

The projector in Eq.~(\ref{eq:gram}) is the Fourier statement of the identity that a discrete gradient has no curl, which Section~\ref{sec:rep} uses at the boundary: the one direction that $\mathcal{R}_{\boldsymbol{\xi}}$ annihilates is the gradient direction.

\begin{corollary} \label{cor:gain}
Let $\boldsymbol{\xi}\neq\mathbf{0}$ and let $\mathbf{r}^{\parallel}_{\boldsymbol{\xi}}$ be the component of $\mathbf{r}_{\boldsymbol{\xi}}$ in the range of $\mathcal{R}_{\boldsymbol{\xi}}$. Then the gradient that reaches the potential satisfies
\begin{equation} \label{eq:gainbound}
    \big\|\mathcal{R}_{\boldsymbol{\xi}}^{*}\,\mathbf{r}_{\boldsymbol{\xi}}\big\|
    \;=\; |\mathbf{q}(\boldsymbol{\xi})|\;\big\|\mathbf{r}^{\parallel}_{\boldsymbol{\xi}}\big\|
    \;\leq\; |\mathbf{q}(\boldsymbol{\xi})|\;\big\|\mathbf{r}_{\boldsymbol{\xi}}\big\| ,
\end{equation}
with
\begin{equation} \label{eq:gainlimits}
    |\mathbf{q}(\boldsymbol{\xi})|^{2}=\frac{4}{h^{2}}\sum_{\alpha=1}^{3}\sin^{2}\frac{\xi_\alpha h}{2},
    \qquad
    |\mathbf{q}(\boldsymbol{\xi})| = |\boldsymbol{\xi}| + O(|\boldsymbol{\xi}|^{3}h^{2}) \ \text{ as } |\boldsymbol{\xi}|h\to 0,
    \qquad
    \max_{\boldsymbol{\xi}}|\mathbf{q}(\boldsymbol{\xi})| = \frac{2\sqrt{3}}{h} .
\end{equation}
The gain $|\mathbf{q}(\boldsymbol{\xi})|$ is nondecreasing in each $|\xi_\alpha|$ on $[0,\pi/h]$.
\end{corollary}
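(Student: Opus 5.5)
The plan is to derive everything from Proposition~\ref{prop:gain}, working at a fixed wavevector $\boldsymbol{\xi}\neq\mathbf{0}$ and using the Fourier block $\mathcal{R}_{\boldsymbol{\xi}}=[\mathbf{q}]_{\times}$.

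\textbf{Step 1: the norm identity.} I would decompose $\mathbf{r}_{\boldsymbol{\xi}}=\mathbf{r}^{\parallel}_{\boldsymbol{\xi}}+\mathbf{r}^{\perp}_{\boldsymbol{\xi}}$, with $\mathbf{r}^{\parallel}_{\boldsymbol{\xi}}\in\operatorname{range}(\mathcal{R}_{\boldsymbol{\xi}})$ and $\mathbf{r}^{\perp}_{\boldsymbol{\xi}}$ in the orthogonal complement. That complement equals $\ker\mathcal{R}_{\boldsymbol{\xi}}^{*}$, so $\mathcal{R}^{*}_{\boldsymbol{\xi}}\mathbf{r}_{\boldsymbol{\xi}}=\mathcal{R}^{*}_{\boldsymbol{\xi}}\mathbf{r}^{\parallel}_{\boldsymbol{\xi}}$.

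It then suffices to show that $\mathcal{R}^{*}_{\boldsymbol{\xi}}$ acts as $|\mathbf{q}|$ times an isometry on its coimage, which is $\operatorname{range}(\mathcal{R}_{\boldsymbol{\xi}})$. The cleanest route uses the analogue of Eq.~(\ref{eq:gram}) for the other product. Since $\mathcal{R}_{\boldsymbol{\xi}}$ is antisymmetric up to conjugation, $\mathcal{R}_{\boldsymbol{\xi}}\mathcal{R}^{*}_{\boldsymbol{\xi}}=-[\mathbf{q}]_{\times}[\bar{\mathbf{q}}]_{\times}=|\mathbf{q}|^{2}\mathbf{I}-\bar{\mathbf{q}}\mathbf{q}^{\top}$. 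This uses the same bilinear identity as in the proof of Proposition~\ref{prop:gain}. The kernel of this Gram matrix is spanned by $\bar{\mathbf{q}}$, and $\bar{\mathbf{q}}$ is orthogonal to the range, since $\langle \bar{\mathbf{q}},\mathbf{q}\times\mathbf{a}\rangle=\mathbf{q}^{\top}(\mathbf{q}\times\mathbf{a})=0$. Hence, for $\mathbf{r}^{\parallel}$ in the range, $\|\mathcal{R}^{*}\mathbf{r}^{\parallel}\|^{2}=\langle \mathbf{r}^{\parallel},\mathcal{R}\mathcal{R}^{*}\mathbf{r}^{\parallel}\rangle=|\mathbf{q}|^{2}\|\mathbf{r}^{\parallel}\|^{2}$.

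The inequality in Eq.~(\ref{eq:gainbound}) then follows from $\|\mathbf{r}^{\parallel}\|\le\|\mathbf{r}\|$, because the decomposition is orthogonal.

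\textbf{Step 2: the main obstacle.} The delicate point is keeping complex conjugates straight: the kernel of $\mathcal{R}^{*}$ is spanned by $\bar{\mathbf{q}}$, not $\mathbf{q}$. I would verify the identity $\mathbf{q}^{\top}(\mathbf{q}\times\mathbf{a})=0$ directly, since it holds algebraically for complex vectors without conjugation. An alternative that avoids this bookkeeping is to invoke the SVD statement already in Proposition~\ref{prop:gain}. The nonzero singular values of $\mathcal{R}^{*}_{\boldsymbol{\xi}}$ are both $|\mathbf{q}|$, and its right singular vectors for these values span $\operatorname{range}(\mathcal{R}_{\boldsymbol{\xi}})$, which gives the identity immediately.

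\textbf{Step 3: the formulas in Eq.~(\ref{eq:gainlimits}).} The first formula is the sum of the squared moduli $|e^{\mathrm{i}x}-1|^{2}=2-2\cos x=4\sin^{2}(x/2)$ with $x=\xi_\alpha h$, consistent with Eq.~(\ref{eq:symbol}).

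For the small-$h$ asymptotics, I would expand $\sin^{2}(x/2)=x^{2}/4-x^{4}/48+O(x^{6})$. This gives $|\mathbf{q}|^{2}=|\boldsymbol{\xi}|^{2}-\tfrac{h^{2}}{12}\sum_\alpha\xi_\alpha^{4}+O(|\boldsymbol{\xi}|^{6}h^{4})$. Taking the square root as $|\boldsymbol{\xi}|\sqrt{1-\epsilon}$ with $\epsilon=O(|\boldsymbol{\xi}|^{2}h^{2})$ yields $|\boldsymbol{\xi}|+O(|\boldsymbol{\xi}|^{3}h^{2})$, since $\sum_\alpha\xi_\alpha^{4}\le|\boldsymbol{\xi}|^{4}$.

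For the maximum, each $\sin^{2}(\xi_\alpha h/2)\le1$, with equality at $\xi_\alpha=\pi/h$. That point is admissible because the interval is closed at $\pi/h$, so the maximum of $|\mathbf{q}|^{2}$ is $12/h^{2}$ and $\max|\mathbf{q}|=2\sqrt{3}/h$.

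Finally, monotonicity holds because $t\mapsto\sin^{2}(th/2)$ is nondecreasing on $[0,\pi/h]$, where its argument lies in $[0,\pi/2]$. Each summand is therefore nondecreasing in $|\xi_\alpha|$, and the square root preserves order.
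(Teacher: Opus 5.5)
Your proposal is correct and follows the paper's proof. Both arguments write $\|\mathcal{R}_{\boldsymbol{\xi}}^{*}\mathbf{r}_{\boldsymbol{\xi}}\|^{2}=\mathbf{r}_{\boldsymbol{\xi}}^{*}\mathcal{R}_{\boldsymbol{\xi}}\mathcal{R}_{\boldsymbol{\xi}}^{*}\mathbf{r}_{\boldsymbol{\xi}}$ with $\mathcal{R}_{\boldsymbol{\xi}}\mathcal{R}_{\boldsymbol{\xi}}^{*}=|\mathbf{q}|^{2}(\mathbf{I}-\Pi^{\perp}_{\boldsymbol{\xi}})$, where $\Pi^{\perp}_{\boldsymbol{\xi}}$ projects onto $\ker\mathcal{R}_{\boldsymbol{\xi}}^{*}=(\operatorname{range}\mathcal{R}_{\boldsymbol{\xi}})^{\perp}$, and both read the remaining formulas off the symbol and the Taylor expansion of $\sin$; your identification of that kernel as $\operatorname{span}(\bar{\mathbf{q}})$ and your explicit square-root expansion just fill in details the paper leaves implicit.
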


\begin{proof}
Eq.~(\ref{eq:gainbound}) is Proposition~\ref{prop:gain} applied to $\|\mathcal{R}_{\boldsymbol{\xi}}^{*}\mathbf{r}_{\boldsymbol{\xi}}\|^{2}
=\mathbf{r}_{\boldsymbol{\xi}}^{*}\mathcal{R}_{\boldsymbol{\xi}}\mathcal{R}_{\boldsymbol{\xi}}^{*}\mathbf{r}_{\boldsymbol{\xi}}$, together with $\mathcal{R}_{\boldsymbol{\xi}}\mathcal{R}_{\boldsymbol{\xi}}^{*}=|\mathbf{q}|^{2}(\mathbf{I}-\Pi^{\perp}_{\boldsymbol{\xi}})$, where $\Pi^{\perp}_{\boldsymbol{\xi}}$ projects onto $\ker\mathcal{R}_{\boldsymbol{\xi}}^{*}$, which is the orthogonal complement of the range of $\mathcal{R}_{\boldsymbol{\xi}}$. The remaining statements follow from Eq.~(\ref{eq:symbol}) and from $\sin x = x + O(x^{3})$, with the maximum attained when every $|\xi_\alpha|$ equals $\pi/h$.
\end{proof}

The factor by which the gradient is rescaled on its way to the potential is $|\mathbf{q}(\boldsymbol{\xi})|$, which vanishes linearly as the wavelength grows and is largest at the grid scale, so a mode of wavelength $2\pi/|\boldsymbol{\xi}|$ receives a gradient smaller by a factor of order $|\boldsymbol{\xi}|h$ than a mode at the grid scale receives, and the gain at the largest resolved wavelength falls below the gain at the Nyquist wavenumber by a factor of order the ratio of the domain size to the grid spacing. Any elimination of a differential constraint through a potential contributes the singular values of the eliminating operator in the same way, so the attenuation is a property of the change of variables. The component of the face-side gradient outside the range of $\mathcal{R}_{\boldsymbol{\xi}}$ never reaches the potential, which is the gradient-side counterpart of the fact that the construction emits only fields inside $\mathcal{V}(g)$.

\begin{remark} \label{rem:gaintwice}
On a flat portion of the boundary the same computation applies to the composition through which the shared scalar potential acts, namely a gradient followed by a curl. Each factor contributes one power of $|\mathbf{q}(\boldsymbol{\xi})|$, so the gain from the loss to $G$ carries $|\mathbf{q}(\boldsymbol{\xi})|^{2}$ and the attenuation at large scales is squared. This scaling motivates the multi-resolution readout of \(G\) in Section~\ref{sec:surface}. The boundary complex is not a periodic lattice, so the argument is exact where the boundary is flat and serves as a guide elsewhere.
\end{remark}

\section{Data Details} \label{app:data}

\subsection{Dataset Generation} \label{app:dataset}

Every layout is a building height field on the horizontal grid, extruded from the ground so that each building is a filled column, and is fixed by a morphology type and a random seed. The type sets the composition of the sixteen blocks that tile the domain, from archetypes differing in footprint, spacing and storey count: courtyard blocks, rows of elongated buildings, isolated towers, towers on a low podium, low-rise fabric and open ground. It thereby fixes the plan-area density and the height distribution. The seed draws the street-grid offsets, the archetype of each block, the storey count of each building and the placement of each footprint. Streets run straight through the domain at widths of 24, 16 and 6\,m on a block rhythm of 128\,m, which divides the domain width and so respects the horizontal periodicity. Storeys are 3\,m high and every height is quantized to $h$, so that no building edge cuts a cell.

All sequences come from the PALM large-eddy simulation model~\citep{maronga2020overview} on a $24\!\times\!128\!\times\!128$ grid of uniform spacing $h=2$\,m, a domain of $48\!\times\!256\!\times\!256$\,m that is doubly periodic horizontally, bounded below by rough ground and above by a free-slip lid. The solver blocks the cells inside a building and writes velocities on the native Arakawa-C staggered grid. Subgrid stresses use the $1.5$-order closure with the mixing length limited near surfaces, advection uses the Piacsek--Williams scheme, and time stepping is third-order Runge--Kutta with the perturbation pressure solved at every substep. That projection uses the multigrid solver with masking disabled, an unbounded cycle count, a residual limit of $10^{-13}$ and four Gauss--Seidel sweeps per level. The roughness length is $z_0=0.1$\,m on the ground and on all building surfaces, which carry a stress-type condition from Monin-Obukhov similarity theory~\citep{monin1954basic}. The normal velocity vanishes on the ground, the buildings and the lid, and the Coriolis parameter is zero.

Each run starts from constant profiles at a surface potential temperature of $300$\,K with small random disturbances and integrates for $1{,}000$\,s, after which $100$ consecutive snapshots at intervals of $1$\,s form one sequence. The inflow enters through the two surface geostrophic wind components, which carry the free-stream speed and the angle to the fixed street grid, so that direction varies independently of geometry. Stratification enters through the surface heat flux and the temperature lapse rate: $0.1$\,K\,m\,s$^{-1}$ with no lapse for the convective condition, zero with no lapse for the neutral one, and $-0.02$\,K\,m\,s$^{-1}$ with $0.01$\,K\,m$^{-1}$ for the stable one. The nine inflow conditions combine three wind directions ($15^{\circ}$, $30^{\circ}$, $45^{\circ}$), three free-stream speeds ($3$, $6$, $9$\,m\,s$^{-1}$), and stability (convective, neutral, stable). The three conditions held out for testing are $(15^{\circ}, 9\,\mathrm{m\,s^{-1}}, \mathrm{convective})$, $(30^{\circ}, 3\,\mathrm{m\,s^{-1}}, \mathrm{neutral})$ and $(45^{\circ}, 6\,\mathrm{m\,s^{-1}}, \mathrm{stable})$. \bench{} holds $85$ such sequences. A PALM run uses 16 MPI processes.
During the whole experiments, we use an Intel(R) Xeon(R) Gold 5318Y CPU and one NVIDIA A800 80GB PCIe GPU with 80GB memory to train and inference.

\begin{table*}[h]
\centering
\small
\caption{Related flow-field benchmark that wind surrogates could be evaluated on. \textbf{Solver} names the method that generated the fields. \textbf{3D} marks volumetric fields. \textbf{Obstacles} marks solid bodies that sit inside the domain and are resolved by the grid. \textbf{Transient} marks instantaneous slices. \textbf{Geometry} marks the geometric shift.}
\label{tab:datasets}
\resizebox{\textwidth}{!}{
\begin{tabular}{llcccc}
\toprule
\textbf{Dataset} & \textbf{Solver} & \textbf{3D} & \textbf{Obstacles} & \textbf{Transient} & \textbf{Geometry} \\
\midrule
PDEBench~\citep{takamoto2022pdebench}         & Canonical PDE solvers   & $\checkmark$ & $\times$     & $\checkmark$ & $\times$ \\
The Well~\citep{ohana2024well}         & Multi-physics solvers   & $\checkmark$ & $\times$     & $\checkmark$ & $\times$ \\
PDEArena~\citep{gupta2022towards}         & Canonical PDE solvers   & $\times$     & $\times$     & $\checkmark$ & $\times$ \\
JHTDB~\citep{li2008public}            & DNS                     & $\checkmark$ & $\times$     & $\checkmark$ & $\times$ \\
Gen4Turbulence~\citep{oommen2025learning}   & DNS                     & $\checkmark$ & $\times$     & $\checkmark$ & $\times$ \\
Geometry Matters~\citep{rabeh2024geometry} & Steady 2D solver        & $\times$     & $\checkmark$ & $\times$     & $\checkmark$ \\
UrbanFlow-3K~\citep{lee2026urbanflow}     & LBM, 2D                 & $\times$     & $\checkmark$ & $\times$     & $\checkmark$ \\
UrbanTALES~\citep{nazarian2025urbantales}       & LES (PALM)              & $\checkmark$ & $\checkmark$ & $\times$     & $\checkmark$ \\
U3DWind~\citep{zhou2026u3dwind}          & LBM-LES                 & $\checkmark$ & $\checkmark$ & $\times$     & $\checkmark$ \\
\midrule
\textbf{\bench{} (ours)} & \textbf{LES (PALM)} & $\checkmark$ & $\checkmark$ & $\checkmark$ & $\checkmark$ \\
\bottomrule
\end{tabular}
}
\end{table*}

\subsection{Comparison with Existing Benchmarks} \label{app:compare}
Measuring Eq.~(\ref{eq:pre-admissible}) requires data in which the solid boundaries are present and resolved, and measuring it under time stepping requires instantaneous snapshots. Table~\ref{tab:datasets} summarizes the public benchmarks against the two requirements.
To our knowledge no publicly released dataset provides volumetric, instantaneous velocity fields over resolved urban geometry. \bench{} is built to fill this gap. We use the tensor layout, resolution class and snapshot count of Gen4Turbulence~\citep{oommen2025learning} as a reference, and extend it with the two elements that our setting requires, a per-layout solid-occupancy mask and a sample axis spanning procedurally generated building layouts.


\section{Experimental Details} \label{app:exp}

\subsection{Implementation Details.} \label{app:details}

During the evaluation, all velocities are normalized by the free-stream speed, and the divergence is computed from these normalized velocities. The wall residual is taken over every face that is not bounded by fluid on both sides. Within every training sequence the frames are split as [3,70) for training and [76,84) for validation, with a six-frame gap so that no validation frame is temporally adjacent to a training frame. Training and evaluation are separated over sequences: a test sequence is absent from training at every frame, so the two may occupy the same time range. Every test sequence contributes the same 18 start frames $t_0=66,\ldots,83$, each rolled out to $t_0+16\le 99$.

All evaluation use $n=4$ history frames. Every method receives the same input and the same split, and is trained for 120 epochs with Adam at a learning rate of $10^{-3}$, no learning-rate schedule. \method{} is trained with a mean squared reconstruction term alone, with no penalty on the constraints, and it is trained with the pushforward scheme of~\citet{brandstetter2022message}: with probability $p$ the most recent input frame is replaced by the model's own one-step prediction, taken without gradient, and the target is advanced by one step so that it remains the increment relative to that prediction.
All convolutions pad circularly along the two horizontal axes, which is what the doubly periodic domain requires, and with zeros along the vertical, where the ground and the lid are physical boundaries. This applies to \method{} and to every convolutional baseline alike.
\method{} uses backbone widths $C_0,C_1,C_2=32,64,128$ over $S=3$ resolution levels, multiresolution width $c_{\mathcal{A}}=16$, surface width $c_{\mathcal{B}}=32$, descriptor width $d_{\mathrm{des}}=8$ and $K=4$ message-passing jumps.
For every method, including \method{}, we keep the epoch at which the 16-step rollout error on the validation frames is lowest, following the rollout-based selection of PDEArena~\citep{gupta2022towards}.

\subsection{Ablation Variants} \label{app:variants}

The first four variants concern where the constraint is imposed.

{\bfseries w/o null-space $\Psi_g$.}
The structured output representation of Eq.~(\ref{eq:pre-nullspace}) is removed and the network regresses the three velocity components directly, through the same read-out head that the baselines use.

{\bfseries w/ soft penalty.}
Starting from \textsc{w/o null-space $\Psi_g$}, the training objective gains a divergence term and a wall term, $\lambda(\|\nabla_h\!\cdot\!\mathbf{u}\|^2+\|u_f\|^2)$ with $\lambda=1$ on both. 

{\bfseries w/ post-projection.}
Starting from \textsc{w/o null-space $\Psi_g$}, and without any further training, each predicted step is passed through a pressure projection: the normal velocity is cleared on solid faces, a Poisson problem $\nabla_h^2\phi=\nabla_h\!\cdot\!\mathbf{u}$ with homogeneous Neumann conditions on solid boundaries is solved by conjugate gradients in double precision to a tolerance of $10^{-10}$, and the gradient of $\phi$ is subtracted. 

{\bfseries w/o boundary $G$.}
The discrete curl of Eq.~(\ref{eq:curl}) is kept, so Proposition~\ref{prop:div} still applies, but the substitution of Eq.~(\ref{eq:hodge}) on boundary edges is skipped: those edges retain the free output of the network instead of being set to a discrete gradient of the boundary potential. 

The remaining variants vary design choices that are not required for either guarantee.

{\bfseries w/o increment $\Delta\mathbf{u}$.}
The network predicts the absolute field $\mathbf{u}^{t+1}$ instead of the increment $\mathbf{u}^{t+1}-\mathbf{u}^{t}$.

{\bfseries w/o multi-scale $\mathbf{A}$.}
The set of resolutions in Eq.~(\ref{eq:multires}) is reduced from $\mathcal{S}=\{1,2,4\}$ to $\mathcal{S}=\{1\}$, so the potential is read out from the finest backbone features alone.

{\bfseries w/o learned fusion.}
At each boundary node the surface network gathers the eight surrounding cells of the backbone feature map and combines them with weights produced by a small learned scorer. Here that scorer is deleted and the eight slots are averaged uniformly over the valid ones.




\section{Supplementary experiments} \label{app:results}

\subsection{Performance on \textbf{unseen inflows} setting} \label{app:main_sup}

Table~\ref{tab:ood2} reports the \textbf{unseen inflows} setting, whose test sequences hold out three combinations of wind direction, free-stream speed and stability. \method{} is first at all four horizons, ahead of the strongest neural baseline by $55.7\%$ at $k{=}1$ and by $33.2\%$ at $k{=}16$. This setting separates the methods differently from the other two: every neural baseline degrades far enough by $k{=}16$ that POD overtakes all of them, and AFNO leaves the range of the table altogether. 

\begin{table*}[t]
    \centering
    \caption{Performance on the \textbf{unseen inflows} setting of \bench{}. MSE is reported at four rollout horizons. Highlighted are the top \textcolor{mydeepred}{first}, \textcolor{mydeepblue}{second} results. `$<$' in two constraint columns means the power-of-ten bound on the measured value, and $0$ is exact.}
    \label{tab:ood2}
    \renewcommand{\arraystretch}{1.2}
    \resizebox{\linewidth}{!}{
    \begin{tabular}{l|cccc|cc}
        \toprule
        \multirow{2}{*}{Methods} & \multicolumn{4}{c|}{MSE$\downarrow$ ($\times 10^{-3}$)} & \multicolumn{2}{c}{worst over all $k$} \\\cmidrule{2-5}\cmidrule{6-7}
        & \multicolumn{1}{c}{$k{=}1$} & \multicolumn{1}{c}{$k{=}4$} & \multicolumn{1}{c}{$k{=}8$} & \multicolumn{1}{c|}{$k{=}16$} & \multicolumn{1}{c}{$\max|\nabla\!\cdot\!\mathbf{u}|\downarrow$} & \multicolumn{1}{c}{$\max|\mathbf{u}\!\cdot\!\mathbf{n}|\downarrow$} \\
        \midrule
        DMD & 87.65\small{$\pm$27.47} & 87.49\small{$\pm$27.13} & 98.42\small{$\pm$29.17} & 100.31\small{$\pm$26.95} & $<10^{-6}$ & $0$ \\
        Time-Mean & 47.47\small{$\pm$15.62} & 73.29\small{$\pm$22.01} & 80.51\small{$\pm$24.51} & 82.36\small{$\pm$23.10} & $<10^{-6}$ & $0$ \\
        POD & 50.64\small{$\pm$15.09} & 51.21\small{$\pm$14.96} & 52.01\small{$\pm$15.16} & \textcolor{mydeepblue}{52.44\small{$\pm$15.12}} & $<10^{-5}$ & 0.30 \\
        Dil-ResNet & 21.56\small{$\pm$10.67} & 56.34\small{$\pm$17.34} & 84.91\small{$\pm$21.44} & 201.57\small{$\pm$42.63} & 2.09 & 2.26 \\
        TFNO & 35.86\small{$\pm$10.78} & 82.91\small{$\pm$23.58} & 137.90\small{$\pm$50.52} & 188.84\small{$\pm$86.16} & 0.83 & 1.60 \\
        AFNO & 22.19\small{$\pm$8.15} & 51.87\small{$\pm$15.38} & 64.85\small{$\pm$17.64} & $1.6{\times}10^{3}$ & $5.9{\times}10^{2}$ & 2.63 \\
        AViT & 14.64\small{$\pm$4.59} & 42.54\small{$\pm$13.20} & 70.31\small{$\pm$18.66} & 93.57\small{$\pm$22.99} & 1.40 & 1.64 \\
        P3D & 3.02\small{$\pm$1.90} & 19.57\small{$\pm$14.73} & 44.39\small{$\pm$24.34} & 83.77\small{$\pm$26.10} & 0.37 & 1.70 \\
        U-Net & \textcolor{mydeepblue}{2.01\small{$\pm$1.29}} & \textcolor{mydeepblue}{10.85\small{$\pm$5.60}} & \textcolor{mydeepblue}{30.62\small{$\pm$12.51}} & 69.36\small{$\pm$17.57} & 0.24 & 1.30 \\
        \hline\hline
        \textbf{\method{}} & \textcolor{mydeepred}{\textbf{0.89\small{$\pm$0.67}}} & \textcolor{mydeepred}{\textbf{6.35\small{$\pm$5.41}}} & \textcolor{mydeepred}{\textbf{19.53\small{$\pm$15.00}}} & \textcolor{mydeepred}{\textbf{46.31\small{$\pm$25.52}}} & $<10^{-6}$ & $<10^{-6}$ \\
        \bottomrule
    \end{tabular}
    }
    \vskip -0.1in
\end{table*}

\subsection{Ablation on \textbf{unseen inflows} setting} \label{app:ablation_sup}

Table~\ref{tab:ablation_ood2} repeats the ablation of Section~\ref{main} on the same setting, where \method{} is again first at both horizons. Two components change rank here. The multiresolution potential and its learned fusion carry far more weight under inflow shift, costing $30.7\%$ and $51.6\%$ at $k{=}16$ against $1.8\%$ and $16.0\%$ on the unseen layout setting.

\begin{table*}[h]
    \centering
    \caption{Ablation on the \textbf{unseen inflows} setting of \bench{}. MSE is reported at two rollout horizons. Highlighted are the top \textcolor{mydeepred}{first}, \textcolor{mydeepblue}{second} results. `$<$' in two constraint columns means the power-of-ten bound on the measured value, and $0$ is exact.}
    \label{tab:ablation_ood2}
    \renewcommand{\arraystretch}{1.15}
    \resizebox{\textwidth}{!}{
    \begin{tabular}{l|cc|cc|cc}
        \toprule
        \multirow{2}{*}{Variant} & \multicolumn{2}{c|}{MSE$\downarrow$} & \multicolumn{2}{c|}{$\max|\nabla\!\cdot\!\mathbf{u}|\downarrow$} & \multicolumn{2}{c}{$\max|\mathbf{u}\!\cdot\!\mathbf{n}|\downarrow$} \\\cmidrule{2-7}
        & \multicolumn{1}{c}{$k{=}1$} & \multicolumn{1}{c|}{$k{=}16$} & \multicolumn{1}{c}{$k{=}1$} & \multicolumn{1}{c|}{$k{=}16$} & \multicolumn{1}{c}{$k{=}1$} & \multicolumn{1}{c}{$k{=}16$} \\
        \midrule
        \textbf{\method{}} & \textcolor{mydeepred}{\textbf{0.89\small{$\pm$0.67}}} & \textcolor{mydeepred}{\textbf{46.31\small{$\pm$25.52}}} & $<10^{-6}$ & $<10^{-6}$ & $<10^{-6}$ & $<10^{-6}$ \\
        \hline\hline
        w/o null-space $\Psi_g$ & 1.71\small{$\pm$1.52} & 63.81\small{$\pm$32.34} & 0.28 & 1.24 & 1.37 & 3.32 \\
        w/ soft penalty & 12.89\small{$\pm$3.36} & 109.27\small{$\pm$23.41} & 0.70 & 0.84 & 0.51 & 0.71 \\
        w/ post-projection & 1.63\small{$\pm$1.49} & 60.98\small{$\pm$33.39} & $<10^{-6}$ & $<10^{-6}$ & $0$ & $0$ \\
        w/o boundary $G$ & 1.93\small{$\pm$2.07} & 64.68\small{$\pm$38.22} & $<10^{-6}$ & $<10^{-6}$ & 1.34 & 1.72 \\
        w/o increment $\Delta\mathbf{u}$ & 3.18\small{$\pm$2.13} & 2122.96\small{$\pm$178.64} & $<10^{-6}$ & $<10^{-5}$ & $<10^{-6}$ & $<10^{-5}$ \\
        w/o multi-scale $\mathbf{A}$ & 2.56\small{$\pm$3.50} & \textcolor{mydeepblue}{60.51\small{$\pm$31.26}} & $<10^{-6}$ & $<10^{-6}$ & $<10^{-6}$ & $<10^{-6}$ \\
        w/o learned fusion & \textcolor{mydeepblue}{1.05\small{$\pm$0.78}} & 70.22\small{$\pm$37.12} & $<10^{-6}$ & $<10^{-6}$ & $<10^{-6}$ & $<10^{-6}$ \\
        \bottomrule
    \end{tabular}
    }
\end{table*}


\subsection{Computational complexity} \label{app:cost}
We analyze the computational complexity of \method{} to show its efficiency. Let $N=D\cdot H\cdot W$ denote the number of grid cells, $|\mathcal{N}|$ and $|\mathcal{E}_{\partial}(g)|$ represent the numbers of nodes and of edges of the boundary complex, $C_0$ indicate the width of the finest backbone level, and $c_{\mathcal{A}}$, $c_{\mathcal{B}}$ and $d_{\mathrm{des}}$ the widths of the multiresolution branches, of the surface graph network and of the node descriptor, respectively, with $S$ resolution levels and $K$ rounds of message passing. A $3\!\times\!3\!\times\!3$ convolution at level $s$ acts on $N/8^{s}$ cells with $C_s=2^{s}C_0$ channels, so the computational complexity of the model during the volumetric stage is given by $O\big(\sum_{s}(C_s^{2}+c_{\mathcal{A}}C_s)\,N/8^{s}+SN\big)$. Generally speaking, $C_0$, $c_{\mathcal{A}}$ and $S$ are significantly smaller than $N$, and the geometric sum over the levels is dominated by the finest one. So the time complexity of the first stage of \method{} is $O(C_0^{2}N)$.

The computational complexity of the second stage is $O\big(|\mathcal{N}|\,c_{\mathcal{B}} (C_0+d_{\mathrm{des}}+Kc_{\mathcal{B}}+S)\big)$, where the first term collects the learned gather of Eq.~(\ref{eq:gather}) over the eight cells incident to each node, the second collects the $K$ rounds of Eq.~(\ref{eq:gnn}) with one $c_{\mathcal{B}}\!\times\!c_{\mathcal{B}}$ matrix per axis direction, and the third collects the read-out of Eq.~(\ref{eq:gread}). The assembly operator writes one value on each edge of $\mathcal{E}_{\partial}(g)$ and the discrete curl of Eq.~(\ref{eq:curl}) sums four terms on each face, which together cost $O(|\mathcal{E}_{\partial}(g)|+N)$ and carry no parameters and no linear system. On \bench{} the boundary complex spans between $9.6\%$ and $12.3\%$ of the cells, so $|\mathcal{N}|$ and $|\mathcal{E}_{\partial}(g)|$ are $O(N)$ with a small constant. Therefore, the time complexity of \method{} is $O(C_0^{2}N+Kc_{\mathcal{B}}^{2}|\mathcal{N}|+N)$, which reduces to $O(N)$ at fixed widths. Although our model enforces both constraints by construction, it has comparable complexity with the convolutional baselines, whose cost is $O(C^{2}N)$ at the same resolution.

\subsection{Parameters Analysis} \label{app:params}

\begin{figure}[h]
\vskip -0.1in
    \centering
    \includegraphics[width=\textwidth]{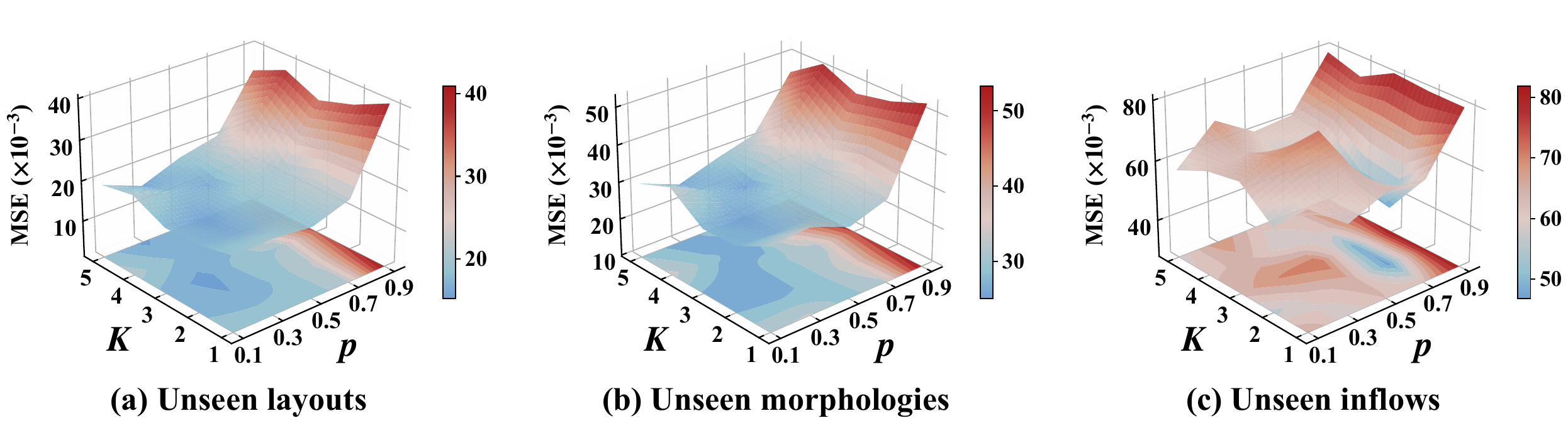}
    \caption{Sensitivity of the $16$-step MSE to the number of message-passing rounds $K$ on the boundary complex and to the pushforward probability $p$.}
    \label{fig:hpgrid}
\end{figure}

\method{} introduces two hyper-parameters: the number of message-passing jump $K$ on the boundary complex and the pushforward probability $p$. Figure~\ref{fig:hpgrid} reports the $16$-step MSE of different hyper-parameters setting.
The shape along $p$ is the same in all three settings: a plateau up to $p=0.5$, followed by an increase error at $p=0.9$. Training almost exclusively on the model's own predictions leaves few steps anchored to the ground truth, leading to the instability of training process. The effect of the $K$ value is relatively stable, and the optimal value is around 4. We therefore fix $K=4$ and $p=0.5$ for all evaluation settings.

\subsection{Visualization} \label{app:vis}

Figures~\ref{fig:err3d_campus}, \ref{fig:err3d_park} and~\ref{fig:err3d_plaza} extend Figure~\ref{fig:err3d} to three further test sequences and to all four reported horizons. Each row of a figure shows one horizon, and the four panels of a row place \method{} beside U-Net, P3D and AViT on the same field with one common colour scale, so that panels are comparable within a row and across rows.
Three observations hold on all three sequences. The error volume grows with the horizon for every method, and the shells of the baselines merge into connected sheets between $k=8$ and $k=16$ while those of \method{} remain separated. At $k=1$ the four methods are close, and the ordering that the tables report at $k=16$ is already visible at $k=4$. The regions that fill first lie in the layer above the roofs, which is where Section~\ref{main} locates the largest error of every method.


\section{Related Works}

\paragraph{Neural surrogates for urban wind.}
Surrogates for neighbourhood-scale wind divide into two families according to what they return. The first predicts a field in a single forward pass. U-Net architectures regress the pedestrian-level field from a layout mask and recover its small scales with a generative refiner~\citep{hoeiness2021positional,kastner2023gan,snaiki2026hierarchical,wu2026multi,wang2025evaluating,du2026st}. Neural operators transfer across wind directions and across cities~\citep{chen2026urban,peng2024fourier,qin2025flume,qin2025modeling}, and geometry-aware operators evaluate on arbitrary domains~\citep{wu2024transolver,wen2026geometry,de2026anchored,wang2025geometry}. The fields these methods return are steady or time-averaged, and in most cases two-dimensional. The second family consists of general neural PDE surrogates~\citep{gupta2022towards,brandstetter2022message,kochkov2021machine}, which learn a single-step evolution operator with convolutional, spectral or attention mechanisms~\citep{ronneberger2015u,stachenfeld2021learned,li2020fourier,guibas2021adaptive,holzschuh2025p3d} and apply it to their own output over a rollout~\citep{lippe2023pde,mccabe2023towards,kohl2026benchmarking}. These methods return three-dimensional transient fields, and the layout enters them as an input channel that conditions the fitted map. \method{} belongs to the second family, and the layout additionally determines the space in which its output is expressed.

\paragraph{Constraint satisfaction in neural surrogates.}
Three mechanisms are in use. Penalty methods add the residual of the constraint to the training objective~\citep{raissi2019physics,karniadakis2021physics}, so that the constraint holds to the accuracy the optimiser reaches. Projection methods correct the prediction after inference, typically through a pressure solve, and restore the constraint exactly at the cost of a linear solve at every step. Structured output representations satisfy the constraint by construction, through divergence-free parameterisations and vector potentials~\citep{beucler2021enforcing,wandel2021teaching,richter2022neural,mohan2023embedding,huang2025geometric,li2026physics} or through geometry-dependent compatible spaces for steady problems~\citep{shaffer2026structure}. The variants \textsc{w/ soft penalty} and \textsc{w/ post-projection} of Appendix~\ref{app:variants} instantiate the first two mechanisms on the backbone of \method{}. Existing structured representations are formulated on periodic domains, or on domains whose boundary is aligned with the grid. A building surface is immersed in a structured grid~\citep{mittal2005immersed} and moves with the layout, so the two constraints considered here act on a shared set of degrees of freedom.

\paragraph{Spectral properties of learned evolution operators.}
The tendency of neural networks to fit low frequencies first is documented~\citep{rahaman2019spectral,xu2019frequency, oommen2025learning} and follows from the optimisation dynamics. Appendix~\ref{app:adjoint} characterises a second mechanism, in which the adjoint of the discrete curl, a fixed operator carrying no parameters, rescales the gradient by the wavenumber before it reaches the backbone. The multiresolution read-out of Eq.~(\ref{eq:multires}) responds to this rescaling, with each level supplying a candidate potential.

\section{Limitations and Future Work}
\label{limitation}

{\bfseries The construction is tied to a regular grid.}
Both guarantees rest on the incidence structure of the uniform staggered grid on which the data are stored. Local mass conservation follows because every interior edge appears in two faces of a cell with opposite orientation, so the six face circulations of Eq.~(\ref{eq:curl}) cancel in pairs. Wall impermeability follows because the four edges of a boundary face carry differences of a single vertex potential, so their circulation telescopes to zero. Carrying the same parameterization to an unstructured mesh, to a cut-cell discretization of the building surfaces, or to a curved manifold requires a discrete de~Rham complex on which both relations continue to hold, together with an assembly operator that identifies the boundary edges of that complex. The first relation is available on simplicial complexes through discrete exterior calculus~\citep{arnold2010finite,desbrun2005discrete}; the second depends on how the solid boundary is represented, and constructing it is open.

{\bfseries The roof-level shear layer is difficult for every surrogate considered.}
Section~\ref{main} reports that all four learned methods are least accurate in the layer where the flow separating from the roofs meets the faster air above, and that this layer is also the least steady one. A one-step map trained on a squared-error objective approaches the conditional mean of the next frame, so the component of the flow that fluctuates most is the component it reproduces least. This concerns the training objective and applies to surrogates of this class independently of the output representation. Whether a probabilistic or ensemble objective recovers that component, and at what cost in pointwise error, is left to future work.